%% file: main_neurips_2026.tex
\documentclass{article}

\usepackage[numbers]{natbib}
\usepackage[main, final]{neurips_2026}

\usepackage[utf8]{inputenc} 
\usepackage[T1]{fontenc}    
\usepackage{hyperref}       
\usepackage{url}            
\usepackage{booktabs}       
\usepackage{amsfonts}       
\usepackage{nicefrac}       
\usepackage{microtype}      
\usepackage{xcolor}         
\usepackage{bm}
\usepackage{graphicx}
\usepackage{amsmath, amssymb, mathtools, amsthm}
\usepackage{xspace}
\usepackage{enumitem}
\usepackage{algorithm}
\usepackage{algpseudocode}

\usepackage{tabularx}
\usepackage{makecell}
\newcolumntype{Y}{>{\centering\arraybackslash}X}

\makeatletter
\newcommand*{\transpose}{%
	{\mathpalette\@transpose{}}%
}
\newcommand*{\@transpose}[2]{%
	\raisebox{\depth}{$\m@th#1\intercal$}%
}
\makeatother

\input{math_commands}

\newtheorem{theorem}{Theorem}[section]
\newtheorem{proposition}[theorem]{Proposition}

\usepackage[capitalize,noabbrev]{cleveref}
\crefname{lemma}{Lemma}{Lemmas}
\crefname{assumption}{Assumption}{Assumptions}
\crefname{prob}{Problem}{Problems}
\crefname{equation}{Eq.}{Eqs.}
\Crefname{equation}{Eq.}{Eqs.}
\crefname{ass}{Assumption}{Assumptions}
\crefname{assum}{Assumption}{Assumptions}

\newcommand{\true}{\mathrm{true}}

\newcommand{\LP}{\texttt{LP}\xspace}
\newcommand{\LPs}{\texttt{LP}s\xspace}

\title{Decision-Focused Learning in MDPs: \\An Occupancy Measure Approach}

\author{
\textbf{Zihao Zhao}$^{1}$\thanks{Corresponding author. Email: \texttt{zzhao628@gatech.edu}} \qquad
\textbf{Ashwath K. Karunakaram}$^{1}$ \\
\textbf{Ali Eshragh}$^{2,3}$\thanks{Equal supervision.} \qquad
\textbf{Yuexing Li}$^{2 \dagger}$ \qquad
\textbf{Kai Wang}$^{1\dagger}$ \\
[0.1em]
$^{1}$Georgia Institute of Technology, Atlanta, GA \\
$^{2}$Johns Hopkins Carey Business School, Washington, DC \\
$^{3}$International Computer Science Institute, Berkeley, CA \\
}

\begin{document}

\maketitle

\begin{abstract}
In this work, we consider decision-focused learning (DFL) for a Markov decision process (MDP), where existing methods differentiate through the KKT conditions of the Bellman equation and require solving a linear system over all state-action pairs, limiting its scalability.
We address this by reformulating the MDP as an occupancy measure-based linear program (\LP), whose feasible region is induced by predicted dynamics, and we derive a closed-form gradient by identifying the active constraints in the feasible polyhedron via the pivoting algorithm. 
This occupancy measure-based \LP layer raises two challenges: (1) \LP's solution gradient is discontinuous when active constraints change, and (2) the \LP backward cost still scales with the state size, which is costly for large or continuous state spaces.
We address the challenges with an augmented Lagrangian surrogate and smooth the boundary jumps by random row sketching of the constraints, and a learnable soft state-aggregation layer and its function-approximation generalization that scales the \LP to large finite and continuous-state MDPs.
Across multiple tasks, our methods reach lower regret than KKT-based DFL and two-stage baselines with significantly lower computation cost.
The source code for all experiments is available  \href{https://github.com/A-Eshragh/State_Aggregation_Project}{here}.
\end{abstract}

\input{01_introduction}

\input{02_related_works}

\input{03_problem_statement}
\input{04_dfl_mdp}
\input{05_experiments}

\input{06_conclusion}


\begin{ack}
We thank Wuhao Bao for detailed discussions of the experiments. This work was supported in part by the Johns Hopkins Carey Business School. This work was supported by NSF IIS-2403240, NSF IIS-2552007, NIH R01HL184139, and Schmidt Sciences AI2050 Fellowship.
\end{ack}

\bibliographystyle{plainnat}
\bibliography{references}

\newpage
\appendix
\input{appendix}

\newpage
\input{checklist.tex}

\end{document}

%% file: math_commands.tex
\usepackage{amsmath,amsfonts,bm, bbm}

\def\eqref#1{equation~\ref{#1}}

\def\1{\bm{1}}

\DeclareMathAlphabet{\mathsfit}{\encodingdefault}{\sfdefault}{m}{sl}
\SetMathAlphabet{\mathsfit}{bold}{\encodingdefault}{\sfdefault}{bx}{n}

\def\gA{{\mathcal{A}}}

\def\gN{{\mathcal{N}}}

\def\gS{{\mathcal{S}}}

\newcommand{\E}{\mathbb{E}}

\newcommand{\R}{\mathbb{R}}



%% file: 01_introduction.tex
\section{Introduction}\label{sec:introduction}
Many sequential decision-making problems require solving an optimization problem whose parameters are unknown in advance and must be predicted from contextual data, e.g., inventory control under uncertain demand~\cite{agrawal2019learning}. Standard two-stage pipelines train a predictive model to minimize forecast error and then feed the predictions into a downstream optimizer. However, minimizing prediction error does not necessarily optimize downstream decision quality~\cite{elmachtoub2022smart, mandiDecisionFocusedLearningLens2022}. Decision-focused learning (DFL) addresses this mismatch by training the predictive model end-to-end, backpropagating decision-quality gradients through the optimization layer~\citep{mandiDecisionFocusedLearningLens2022}.

When the downstream problem is a Markov decision process (MDP), the optimization solves the Bellman optimality equation from predicted rewards and transition probabilities~\citep{wang2021learning, futoma2020popcorn}. Existing DFL methods for MDPs typically differentiate through the Karush--Kuhn--Tucker (KKT) conditions of Bellman optimality, viewing the policy as the solution of a nonlinear system~\citep{wang2021learning}. This entails solving a linear system of dimension $|\gS||\gA|$, which is prohibitive for large state or action spaces. The Bellman operator's nonlinearity can also make this system poorly conditioned, especially as the discount factor approaches one. These issues limit the scalability of KKT-based DFL methods.

In this work, we propose an occupancy-measure \LP layer that overcomes both limitations by exploiting the \emph{linear programming} (\LP) formulation of MDPs over occupancy measures, whose feasible region depends on the predicted dynamics. The \LP has two key advantages. First, under a strictly positive initial distribution, the occupancy \LP is non-degenerate, yielding a well-defined optimal basis and avoiding KKT conditioning issues. Second, differentiating through the \LP layer requires inverting only a single $|\gS| \times |\gS|$ basis matrix, giving a closed-form, well-conditioned gradient and substantially lower cost, especially when $|\gA|$ is large.

However, directly using the uncertain feasible region \LP reformulation faces two practical challenges. 
(a) \emph{Biased loss minimum.} Standard constrained-DFL settings typically predict only objective coefficients, whereas in MDPs, predicted parameters enter both the \LP objective through rewards and the constraints through transitions. Thus, the predicted occupancy is feasible only under the predicted dynamics and generally violates the true flow constraints. Using the biased occupancy in the training loss shifts the loss minimum away from the true parameter, leading the predictor toward a biased target.
(b) \emph{Gradient discontinuity.} 
Changes in the optimal basis can cause sudden jumps in the loss and its gradients, which makes gradient-based training unstable~\citep{mmb+25, berthet2020learning, elmachtoub2022smart}. 
To address (a), we introduce an \emph{augmented Lagrangian surrogate} that adds both a linear dual correction and a quadratic penalty on the true-dynamics constraint to the training loss. 
With an optimal dual solution of the true \LP, we prove that the
true parameter is a global minimizer of this surrogate.
{To address (b), we apply \emph{random row sketching} to the flow constraints and average the resulting \LP solutions. Different sketches can select different bases, and averaging
empirically reduces abrupt gradient changes near basis boundaries.}

The $|\gS| \times |\gS|$ cost, however, is still costly when the state space is large or continuous. To further scale up, we introduce a soft state-aggregation layer that uses a learned softmax membership matrix to compress the $|\gS| \times |\gS|$ system to $M \times M$ with $M \ll |\gS|$. We extend the same construction to continuous-state MDPs via function approximation. Table~\ref{tbl:1} compares the resulting computational cost with the prior KKT approach~\citep{wang2021learning}.

\begin{table}[ht!]
    \centering
    \caption{\small{Comparison of computational complexity between the KKT approach, our LP approach, and our soft state-aggregation extension. Here $M$ denotes the number of aggregated states with $M \ll |\gS|$ and $\omega < 2.373$ denotes the matrix multiplication exponent.}}
    \vspace{-5pt}
    \label{tbl:1}
    \small
    \begin{tabular}{l|c|c|c}
        \textbf{Metric} &
        \textbf{KKT} &
        \textbf{\LP (ours)} &
        \textbf{\LP + State Agg (ours)} \\[0.1em] \hline
        Core Matrix to Invert &
        KKT System (\citep{wang2021learning}, Eq.(7)) &
        Optimal Basis $H_{\theta}^B$ &
        Aggregated Basis $\hat{H}_{\theta}^{B}$ \\[0.1em]
        Matrix Dimension &
        $|\mathcal{S}||\mathcal{A}| \times |\mathcal{S}||\mathcal{A}|$ &
        $|\mathcal{S}| \times |\mathcal{S}|$ &
        $M \times M$ \\
        Inversion Complexity &
        $\mathcal{O}((|\mathcal{S}||\mathcal{A}|)^{\omega})$ &
        $\mathcal{O}(|\mathcal{S}|^{\omega})$ &
        $\mathcal{O}(M^{\omega})$
    \end{tabular}
\end{table}


Our contributions can be summarized as follows:
\begin{itemize}[left=1em,nosep]
\item We propose a novel occupancy-measure \LP layer as a differentiable MDP layer with a closed-form, well-conditioned gradient requiring inversion of a $|\gS| \times |\gS|$ basis matrix.

\item We introduce an augmented Lagrangian DFL surrogate that augments the regret objective, with a global optimum guarantee, and employ random row sketching to empirically reduce sensitivity to basis changes. 

\item We extend the \LP layer to larger finite and continuous-state MDPs via learnable soft state aggregation and function approximation, and show competitive decision quality with lower training cost than the multiple baselines across three tasks
\end{itemize}

%% file: 02_related_works.tex
\section{Related Work}

\paragraph{Differentiable Optimization and Decision-Focused Learning.}

DFL trains a predictive model end-to-end with a downstream optimization layer so that the performance is measured by decision regret rather than parameter accuracy~\citep{elmachtoub2022smart, wilder2019melding, dontiTaskbasedEndtoendModel2017, mandiDecisionFocusedLearningLens2022}. The main challenge in DFL is differentiating through an argmin, which motivates many novel approaches, including implicit differentiation of KKT systems~\citep{ dontiTaskbasedEndtoendModel2017, amos2017optnet}; perturbation-, smoothing-, and interior-point-based relaxations of linear and combinatorial solvers~\citep{wdw+19, berthet2020learning, Pogan2020Differentiation, mandiInteriorPointSolving2020, paulusCombOptNetFitRight2021, dkf24}; surrogate losses with informative subgradients~\citep{elmachtoub2022smart, fwd+20, mmd+21, mandiDecisionFocusedLearningLens2022, tk24, mmb+25}; and learned task-aligned losses~\citep{shahDecisionFocusedLearningDifferentiable2022, jbp+25, kongEndtoEndStochasticOptimization2022}.

The closest line of work to ours considers DFL when uncertain parameters enter the \emph{constraints} of the downstream optimization problem~\citep{huPredictOptimizePackingCovering2023, hll23, mandi2026feasibilityaware, im2026smart, wang2023learning, cristianEndtoEndConstraintEnforcing2023}. These methods mainly focus on solving the issue that mispredicted constraints can make the true optimum infeasible and address this by post-hoc correction functions, feasibility-aware penalty losses, robust contextual uncertainty sets, or constraint-enforcing neural approximations. Their settings are often mixed-integer programs or contextual stochastic LPs without a sequential structure.
In contrast, our setting is the \emph{occupancy-measure linear program} of an MDP, in which a single prediction determines both the objective and constraints simultaneously through a predicted MDP and leads to a coupled sensitivity that prior constrained-DFL works do not have. To handle the predicted dynamics mismatch, we train the predictive model with an augmented Lagrangian surrogate that combines the downstream value under the true dynamics with a linear dual correction and a quadratic penalty on flow-constraint violation.
 
\paragraph{State Aggregation in Markov Decision Processes.}
State aggregation reduces the dimensionality of large MDPs by collapsing groups of states into abstract representations and is a classical tool in approximate dynamic programming and reinforcement learning~\cite{bertsekas1996neuro}. Prior approaches include homomorphism-based aggregation~\citep{ravindran2003smdp}, projection-based methods on a reduced basis~\citep{de2003linear}, deep representation-learning~\citep{gelada2019deepmdp}, and learnable soft aggregation with probabilistic membership~\citep{singh1994reinforcement,zhang2021learning}. We refer readers to~\Cref{app:related_works} for a detailed review. These methods, however, operate on a \emph{fixed} MDP. We instead differentiate a learnable state-aggregation layer end-to-end through a \emph{predicted} MDP, connecting representation learning with decision-focused learning.

%% file: 03_problem_statement.tex
\section{Problem Statement}\label{sec:prob_state}
We consider an end-to-end discounted MDP $(\gS, \gA, P_\theta, r_\theta, \beta, \gamma)$ with the state space $\mathcal{S}$, the action space $\mathcal{A}$, the discount factor $\beta \in [0,1)$ over an infinite horizon, and the initial state distribution $\gamma \in \R^{|\gS|}$ satisfying $\gamma(s) > 0$ for all $s \in \gS$. The transition kernel $P_\theta(s' \mid s, a)$ gives
the probability of moving to state~$s'$ from state~$s$ under action~$a$,
and $r_\theta(s,a)$ denotes the immediate reward in state~$s$ when action~$a$
is taken. Both the reward function and the transition kernel may depend on a
parameter vector $\theta \in \mathbb{R}^n$.
We  assume access to a training set $\{x_i, \theta^*_i\}_{i=1}^N$ in which each point $x_i$ is paired with a true MDP parameter $\theta^*_i$. Thus, the true reward $r_\true := r_{\theta^*}$ and the true transition kernel $P_\true := P_{\theta^*}$ are observed for every training instance and can be used to evaluate the regret. At inference time, only the input $x$ is available.


\paragraph{Occupancy-measure \LP} The discounted state-action occupancy measure $y \in \R^{|\gS||\gA|}$ is defined as $y(s,a) = \sum_{t=0}^\infty \beta^t \Pr(s_t = s, a_t = a)$, the expected discounted frequency of visiting state $s$ and taking action $a$ over the infinite horizon \citep{Eshragh_MOR_2011,Eshragh_MOR_2020}. It is well-known that solving MDPs can be equivalently written as solving the following \LP~\citep{manne1960linear, puterman2014markov}:
\begin{align}
    y^*_\theta = \arg\max_{y \geq 0} 
    r_\theta^\top y \quad
    \text{s.t.} \quad
    H_\theta y = \gamma. \tag{\texttt{LP-1}}
    \label{eq:lp_primal}
\end{align}
Here $H_\theta \in \mathbb{R}^{|\mathcal{S}| \times |\mathcal{S}||\mathcal{A}|}$
is the flow constraint matrix with entries
$[H_\theta]_{s,(s',a')} := \delta_{s,s'} - \beta P_\theta(s \mid s',a')$
where $\delta_{s,s'}$ is the Kronecker delta, i.e., $\delta_{s,s'} = 1$ if $s = s'$, and $0$ otherwise. The optimal policy can be recovered from $y^*_\theta$ by $\pi^*_\theta(a|s) = y^*_\theta(s, a)/\sum_{a' \in \gA} y^*_\theta(s, a')$.

In this paper, we focus on the DFL framework, where the goal is to learn a predictive model $m_{w}(x)$, parameterized by weights $w$, that maps input features $x$ to predicted MDP parameters ${\theta}$, in order to maximize the value (cumulative reward) under the true parameters. 

\paragraph{End-to-end learning objective}
A standard two-stage pipeline trains $m_w$ to minimize the prediction error between ${\theta}$ and the ground-truth $\theta^*$ using the loss $\|m_w(x) - \theta^*\|_2^2$, and solves the MDP at the inference time.
This pipeline ignores how prediction error translates into decision quality. 
On the other hand, DFL trains $m_w$ directly to maximize the quality of the resulting policy. The training objective of DFL can be written as a regret form:
\begin{align}
    \min_{w} L(w) := r_\true^\top y^*_\true - r_\true^\top y^*_\theta
    \label{eq:dfl_regret},
\end{align}
where $y^*_\true$ denotes the occupancy measure under the \emph{true} MDP $\theta^*$\footnote{The true regret is $V_{\true}(\pi^*)-V_{\true}(\pi_\theta)$. It is piecewise constant in $\theta$ because $\pi_\theta$ changes only when the optimal policy switches, making it difficult to differentiate directly.}. The first term is constant in $w$, so optimizing~\Cref{eq:dfl_regret} reduces to maximizing the value of the predicted policy $\hat{\pi}^*$.
By chain rule, we can write the gradient of $L(w)$ as:
\begin{align}
    \frac{d L}{dw}
    & =
    -r_\true^\top\frac{d y^*_\theta}{d \theta} \frac{d \theta}{dw}.
    \label{eq:lp_grad}
\end{align}
The predictor Jacobian $d\theta/dw$ is available through automatic
differentiation; the remaining challenges concern the \LP layer
and its training objective.

\paragraph{Why objective-only \LP methods do not directly apply.}
With fixed constraints, the optimal vertices are fixed. However, in our setting, the optimal vertices can move along with the parameter changes. Thus, previous smoothing methods derived for objective perturbations
over a fixed feasible region therefore require additional
analysis in our setting.

\textbf{(C1) Bias and gradient discontinuity of the LP layer.}  
(a) The predicted occupancy may violate the true flow constraints,
so improving its true-reward score need not improve the policy's
true return. The loss minimum can therefore shift away from the
true parameter.
(b) Basis changes can cause sudden jumps in the loss or its gradient.
We address (a) with an augmented Lagrangian surrogate and (b) with
random row sketching and solution averaging in~\Cref{sec:dfl_loss}.

\textbf{(C2) Computation cost of differentiating through the MDP.} 
Existing DFL methods differentiate through the KKT condition of the Bellman equation, treating the optimal policy as the solution of a \emph{nonlinear} system.  This involves inverting a matrix of dimension $|\gS||\gA| \times |\gS||\gA|$~\citep{wang2021learning, amos2017optnet}. This cost grows quickly with the action space and is the main obstacle to scaling DFL beyond small MDPs. We address this in~\Cref{sec:lp_grad} by deriving a closed-form LP-layer Jacobian whose main cost is solving an $|\gS| \times |\gS|$ linear system, and we further compress this to an $M \times M$ system ($M \ll |\gS|$) via soft state aggregation in~\Cref{sec:state_agg}.

%% file: 04_dfl_mdp.tex
\section{Differentiable MDP via Occupancy Measure LP}
We now show how to differentiate through the occupancy measure \LP. This replaces the nonlinear Bellman equation with linear constraints, and the resulting gradient depends only on a small basis matrix. Then, we will address the bias and gradient discontinuity challenges mentioned above.

\subsection{Closed-form Gradient of the \LP Layer} \label{sec:lp_grad}

Differentiating $y^*_\theta$ requires the optimal basis to be both \emph{unique} and \emph{locally constant} in $\theta$\footnote{{We do not require the \LP to have a globally unique optimum. When multiple optima exist, we follow~\citet{Pogan2020Differentiation} and treat the solver as a deterministic tie-breaking operator that returns a basis solution.}}. When the $\gamma(s) > 0$ for all $s$, every basis feasible solution is non-degenerate~\citep{altman2021constrained, puterman2014markov}. Thus, the positive entries of the selected solution uniquely determine an invertible basis matrix $H^B_\theta \in \R^{|\gS| \times |\gS|}$. We further assume this basis is locally stable, so small perturbations in $\theta$ do not change the selected basis. Under this assumption, the gradient derived below is exact within each basis region. 


In order to compute the Jacobian $dy^*_\theta/d\theta$, we exploit the relationship between $y_{\theta}^*$ and $\theta$ defined by the linear constraints.
Assume that the initial distribution $\gamma$ is independent of $\theta$, and therefore
\begin{align*}
    \frac{\partial H_{\theta}}{\partial \theta} y_{\theta}^*
    + H_{\theta} \frac{d y_{\theta}^*}{d \theta}
    = 0.
\end{align*}
Partitioning $y^*_\theta = (y^B_\theta, y^N_\theta)$ and $H_\theta = (H^B_\theta, H^N_\theta)$
into basic and non-basic components, non-degeneracy implies that $y^B_\theta \in \R^{|\gS|}$, $y^N_\theta = 0$ with $H^B_\theta \in \R^{|\gS| \times |\gS|}$ invertible.
Under the fixed basis assumption, we establish
\begin{align}
    \frac{d y_{\theta}^B}{d \theta}
    & = - \left(H_{\theta}^B\right)^{-1}
    \left( \frac{\partial H_{\theta}^B}{\partial \theta} y_{\theta}^B \right).
    \label{eq:jaco}
\end{align}
Applying the chain rule to the LP-layer Jacobian in~\Cref{eq:jaco} yields the gradient of any downstream loss $L$ that depends on $\theta$ only through $y^*_\theta$:
\begin{align}
    \frac{d L}{d\theta}
    &=
    -\left(\nabla_{y^B_\theta} L\right)^\top
    \left( H_{\theta}^{B} \right)^{-1}
    \left(
        \frac{\partial H_{\theta}^B}{\partial \theta}
         y_{\theta}^B
    \right).
    \label{eq:dfl_gradient}
\end{align}
Differentiating through \LPs is challenging for prior KKT-based DFL work because the LP solution has zero gradient. Specifically, when $\theta$ enters only the LP objective, $y^*_\theta$ does not depend on $\theta$ within a fixed basis region, so $dy^*_\theta/d\theta \equiv 0$ and KKT differentiation provides no learning signal. Common remedies use surrogates such as quadratic regularization on the objective, which distorts the optimum and increases per-step cost. In our occupancy-measure formulation $\theta$ enters both the objective and the flow constraints $H_\theta$, and the constraint Jacobian $\partial H^B_\theta/\partial\theta$ is generically non-zero. Thus, ~\Cref{eq:dfl_gradient} gives a non-zero closed-form gradient, which makes directly differentiating through the \LP possible.

\subsection{DFL Training Loss}\label{sec:dfl_loss}
Although the LP helps us compute a closed-form gradient, it also induces some issues.
Specifically, the original DFL objective ($r_\true^\top y^*_\theta$) can be biased. Since the predicted occupancy $y^*_\theta$ is feasible only under the predicted dynamics $H_\theta$, it generically violates the true flow constraints $H_{\mathrm{true}}$, and can therefore score higher on $r_{\mathrm{true}}^\top y$ than any truly $H_{\mathrm{true}}$-feasible occupancy. Minimizing $-r_{\mathrm{true}}^\top y^*_\theta$ over $\theta$ thus need not recover $\theta_{\mathrm{true}}$, as shown in~\Cref{fig:motivating} (1).
\begin{figure}[htbp]
    \centering
    \includegraphics[width=0.95\linewidth]{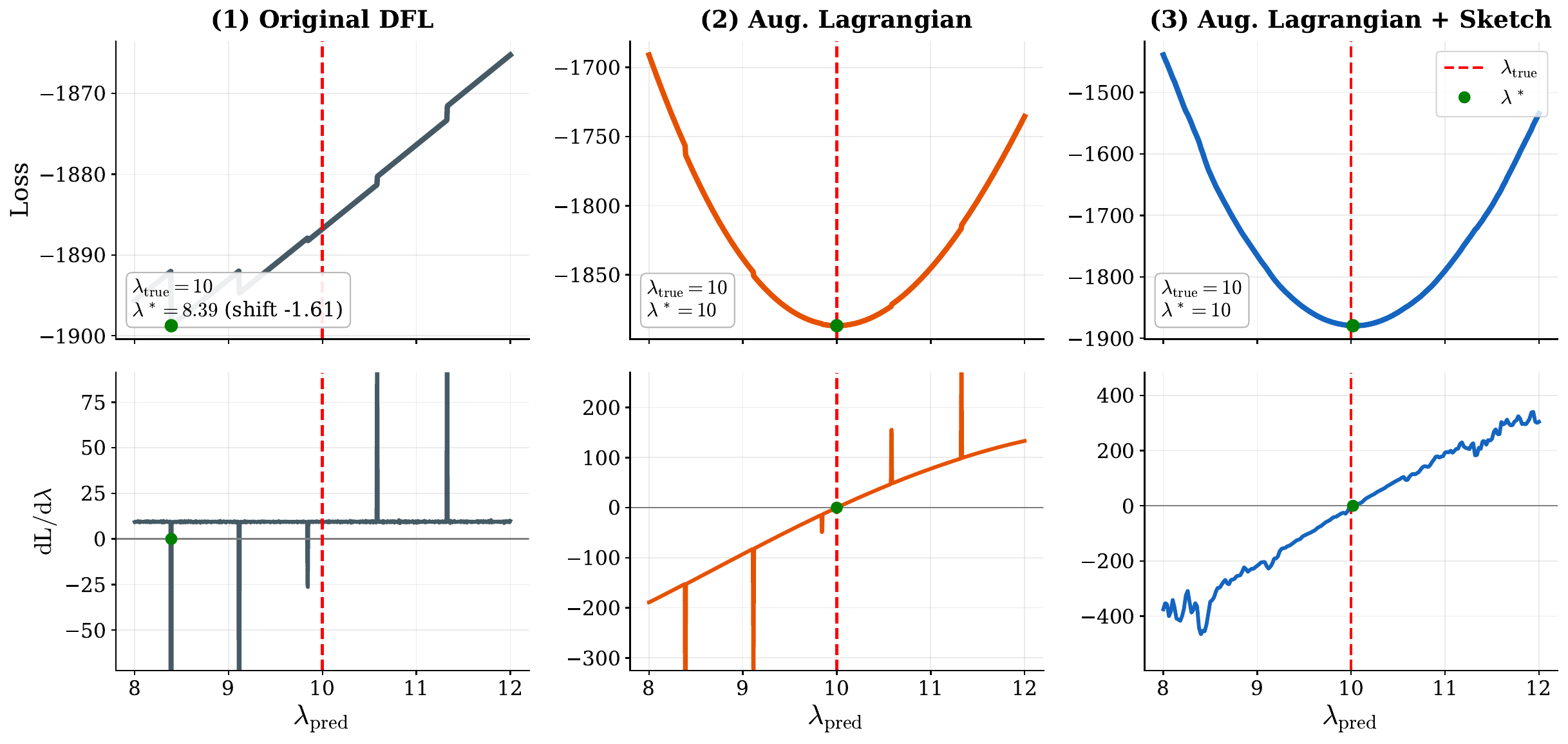}
    \vspace{-10pt}
    \caption{Loss (top) and gradient (bottom) of three DFL objectives as $\lambda_{\mathrm{pred}}$ varies; red dashed indicate $\lambda_\true$ and green dots indicate the loss minimum. (1) The occupancy proxy has a shifted minimum. (2) The augmented Lagrangian has a minimum at the true parameter. (3) Sketch averaging empirically reduces boundary sensitivity.}
    \label{fig:motivating}
\end{figure}


We address these issues with an \emph{augmented Lagrangian surrogate} that combines a decision-quality term with a {linear dual term} and a weighted feasibility penalty:
\begin{align}
L(y^*_\theta) := -r_{\mathrm{true}}^\top y^*_\theta + (H_{\mathrm{true}} y^*_\theta - \gamma)^\top\nu^* + \rho \|H_{\mathrm{true}} y^*_\theta - \gamma\|^2,
\label{eq:dfl_penalty}
\end{align}
where $\nu^*$ is an optimal dual solution of the true LP. {Note that no additional optimization is required, because training already solves the true LP, and standard LP solvers return the primal and dual solutions together.}

\begin{proposition}\label{prop_lp}
    Let $y^*_{\mathrm{true}}$ be the optimal occupancy for the true MDP and $\nu^*$ be its optimal dual.
    For any penalty $\rho > 0$,~\Cref{eq:dfl_penalty} satisfies $L(y^*_\theta) - L(y^*_{\mathrm{true}}) \geq 0$ for every predicted parameter $\theta$, with equality iff $y^*_\theta$ is optimal for the true LP. In particular, $\theta = \theta^*$ is a global minimizer.
\end{proposition}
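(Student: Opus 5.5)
The plan is to prove the claim by LP duality, treating $L$ as a function of an arbitrary nonnegative vector $y$. No property of $y^*_\theta$ beyond $y^*_\theta \ge 0$ is needed, since that vector is feasible for the predicted LP (\texttt{LP-1}) with $H_\theta$. First I fix the dual of the true LP $\max_{y\ge 0}\{r_\true^\top y : H_\true y=\gamma\}$. It is $\min_\nu \gamma^\top\nu$ subject to $H_\true^\top \nu \ge r_\true$. The true LP is feasible and bounded, since it is a discounted MDP with $\gamma(s)>0$. Strong duality therefore gives $r_\true^\top y^*_\true = \gamma^\top \nu^*$ with $H_\true^\top\nu^*\ge r_\true$. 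I will state this sign convention explicitly, because the inequality depends on it. If a solver returns the opposite sign, one replaces $\nu^*$ by $-\nu^*$ in \Cref{eq:dfl_penalty}.

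Next I evaluate both sides. At $y^*_\true$ the flow constraint holds, so $L(y^*_\true) = -r_\true^\top y^*_\true = -\gamma^\top\nu^*$. For general $y\ge 0$, expanding \Cref{eq:dfl_penalty} gives
\begin{align*}
L(y)-L(y^*_\true) &= -r_\true^\top y + (\nu^*)^\top H_\true y - \gamma^\top\nu^* + \rho\|H_\true y-\gamma\|^2 + \gamma^\top\nu^* \\
&= (H_\true^\top\nu^* - r_\true)^\top y + \rho\|H_\true y-\gamma\|^2 .
\end{align*}
The first term is nonnegative by dual feasibility and $y\ge 0$. The second is nonnegative because $\rho>0$. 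Taking $y=y^*_\theta$ gives the inequality for every $\theta$.

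For the equality case I split by terms. If the difference is zero, both nonnegative terms vanish. Then $\|H_\true y-\gamma\|^2=0$ gives $H_\true y=\gamma$, so $y$ is primal feasible for the true LP. Also $(H_\true^\top\nu^*-r_\true)^\top y=0$ is complementary slackness with the dual-feasible $\nu^*$. Feasibility together with complementary slackness gives $r_\true^\top y = (\nu^*)^\top H_\true y = \gamma^\top\nu^*$, so $y$ is optimal. Conversely, if $y$ is optimal for the true LP, then $H_\true y=\gamma$ and $r_\true^\top y=\gamma^\top\nu^*=(\nu^*)^\top H_\true y$. Both terms then vanish, and equality holds.

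Finally, at $\theta=\theta^*$ the predicted LP coincides with the true LP. The solver, treated as the deterministic tie-breaking operator of the earlier footnote, returns an optimal solution of the true LP. By the equality case, $L(y^*_{\theta^*})=L(y^*_\true)$, which is the global minimum value. I expect the only delicate step to be the \emph{iff} direction, which needs the complementary-slackness characterization to hold for every true-optimal $\nu^*$. This is standard LP theory, so I do not anticipate a genuine obstacle beyond fixing the dual sign convention consistently.
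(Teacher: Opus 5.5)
Your proof is correct and follows essentially the same route as the paper. Both use strong duality $r_{\mathrm{true}}^\top y^*_{\mathrm{true}} = \gamma^\top\nu^*$ to reduce the difference to $(H_{\mathrm{true}}^\top\nu^* - r_{\mathrm{true}})^\top y^*_\theta + \rho\|H_{\mathrm{true}} y^*_\theta - \gamma\|^2$, get nonnegativity from dual feasibility, and settle the equality case by primal feasibility plus complementary slackness. Your explicit dual sign convention and the tie-breaking remark at $\theta=\theta^*$ just make precise two points the paper leaves implicit.
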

See Appendix~\ref{app:pf_prop_lp} for proof. The surrogate evaluates the predicted occupancy $y^*_\theta$ under the true reward, while the dual and quadratic terms penalize violations of the true flow constraints. The quadratic penalty additionally strengthens the gradient as $y^*_\theta$ moves farther from the true-feasible set, which provides a stronger correction than linear regret (\Cref{fig:motivating} (2)). However, the loss remains discontinuous at basis boundaries. We therefore apply our sketch-and-solve smoothing method to obtain informative gradients across these transitions, which empirically reduces these jumps.


\paragraph{Sketch-and-solve} 
Classic perturbation-based smoothing for nonsmooth LPs~\citep{berthet2020learning} adds noise to the objective coefficients so that the random optimum spreads across vertices. Here we instead randomize the constraints, whose coefficients also depend on the predicted parameters. We then add randomness on the constraints through \textbf{random row sketching}~\citep{woodruff2014sketching, pilanci2016iterative} of $H_\theta$ and averaging the resulting LP solutions across sketches.
The key idea is that the average solution $\bar{y}^*_\theta$ moves off a single vertex and empirically makes the gradient less sensitive to basis changes.

Specifically, at each forward pass, we draw $K$ independent Gaussian sketches $S^{(1)},\dots,S^{(K)} \in \R^{k \times |\gS|}$ with $k = \alpha |\gS|$ rows, where $\alpha \in (0, 1]$ is the row-keep ratio. {To keep the Euclidean geometry, we draw $S^{(i)}$ with i.i.d. entries $S^{(i)}_{ab} \sim \gN(0, 1/k)$.}
Because $S^{(i)} H_\theta$ has fewer rows than $H_\theta$, the sketched feasible set is larger than the original and may contain unbounded directions, which can make the sketched \LP unbounded. We avoid this issue by keeping the mass-conservation identity $\mathbf{1}^\top y = 1/(1-\beta)$, which follows from summing all rows of $H_\theta y=\gamma$, as an extra constraint.
For each sketch $S^{(i)}$, we solve the \emph{sketched LP}
\begin{align}\label{eq:sketch_lp}
    y^{*(i)}_\theta = \arg\max_{y \geq 0} r_\theta^\top y
    \quad \text{s.t.} \quad
     S^{(i)} H_\theta y = S^{(i)} \gamma, \quad \mathbf{1}^\top y=\frac{1}{1-\beta}\quad(k<|\gS|)
\end{align}
using our differentiable LP layer.For $k=|\gS|$, the Gaussian sketch is invertible almost surely and the redundant mass row is omitted. We average the solutions, $\bar y^*_\theta=K^{-1}\sum_i y^{*(i)}_\theta$. On regions where all selected bases are stable, $d\bar y^*_\theta/d\theta=K^{-1}\sum_i dy^{*(i)}_\theta/d\theta$. The average need not satisfy the original flow constraints, and finite averaging does not guarantee smoothness. The preceding true-parameter minimum guarantee therefore does not automatically extend to this sketched loss. Sketching is used during training; inference uses the unsketched predicted problem.

\section{State Aggregation} \label{sec:state_agg}
In the occupancy-measure \LP, the basis matrix has dimension $|\gS| \times |\gS|$ compared to the KKT-based DFL method's $|\gS||\gA| \times |\gS||\gA|$, so the gradient computation reduces to inverting a much smaller matrix when the action space is large. However, it remains the bottleneck when the state space is large or continuous. We address this with a soft state-aggregation layer that is trained jointly with the predictor and extended to the general state MDPs through linear function approximation.

\paragraph{Operator form of the occupancy LP} Let $\gS$ be a measurable state space (finite or continuous) with initial distribution $\gamma$ and transition kernel $P_\theta$. The discounted occupancy $y : \gS \times \gA \to \R_{\geq 0}$ satisfies the (possibly infinite-dimensional) LP
\begin{align*}
    \max_{y \geq 0} \int r_\theta(s, a) y(s, a) ds da
    \ \ \ \text{s.t.} \ \ \
    \int y(s, a) da - \beta \int P_\theta(s | s', a') y(s', a') ds' da' = \gamma(s), \forall s \in \gS,
\end{align*}
where the integrals collapse to sums in the finite case. We abbreviate the flow constraint as $H_\theta y = \gamma$ below, with the flow operator $H_\theta$ taking $y : \gS \times \gA \to \R$ to the function on $\gS$ above. This generalizes the flow constraint to the continuous setting.

We approximate the occupancy by a finite expansion in $d$ state-action occupancy features $\Phi := [\phi_1, \ldots, \phi_d] : \gS \times \gA \to \R^d$, and project the flow constraint, which lives in $\gS$, onto $m$ state-side constraint features $\Psi := [\psi_1, \ldots, \psi_m] : \gS \to \R^m$:
\begin{align*}
    y_w(s, a) = \Phi(s, a)^\top w = \sum_{j=1}^d w_j \phi_j(s, a),
    \qquad
    \Psi^\top (H_\theta \Phi w - \gamma) = 0.
\end{align*}
If the occupancy features $\phi_j$ are non-negative, then $w \geq 0$ enforces $y_w \geq 0$, and we obtain the projected \LP
\begin{align}
    \max_{w \geq 0} \quad (\Phi^\top r_\theta)^\top w
    \quad \text{s.t.} \quad
    \Psi^\top H_\theta \Phi w = \Psi^\top \gamma,
    \tag{\texttt{LP-2}} \label{eq:projected_lp}
\end{align}
where $\Phi^\top$ and $\Psi^\top$ denote inner products over $\gS \times \gA$ and $\gS$, respectively.~\ref{eq:projected_lp} has the same form as~\ref{eq:lp_primal} under the substitutions $r_\theta \to \Phi^\top r_\theta$, $H_\theta \to \Psi^\top H_\theta \Phi$, $\gamma \to \Psi^\top \gamma$, so the closed-form gradient~\Cref{eq:dfl_gradient} applies directly: differentiating through~\ref{eq:projected_lp} requires inverting only the $m \times m$ optimal-basis submatrix of $\Psi^\top H_\theta \Phi$, dropping the per-step gradient cost from $\mathcal{O}(|\gS|^\omega)$ to $\mathcal{O}(m^\omega)$ with $m \ll |\gS|$. To recover an occupancy in the original $(\gS, \gA)$ space, we lift the projected solution by $y_{\text{disagg}}(s, a) = \Phi(s, a)^\top w^*_\theta$, where $w^*_\theta$ denotes the optimal solution for~\ref{eq:projected_lp}. Note that $\Phi$ and $\Psi$ are themselves \emph{learnable} and trained jointly with the predictor by backpropagating through~\Cref{eq:dfl_gradient}.

\subsection{Soft State Aggregation for Finite State MDPs}
For a finite state space, soft aggregation corresponds to a particular choice of $(\Phi, \Psi)$ derived from a \emph{learnable membership matrix} $W \in \R^{|\gS| \times M}$ with simplex rows ($W \geq 0$, $\sum_m W_{sm} = 1$ for all $s$). Soft assignments, in contrast to hard clustering, keep $W$ differentiable so that the aggregation structure is trained jointly with the predictor. Indexing the occupancy features by $j = (m, a')$ for $m \in [M]$ and $a' \in \gA$, we set
\begin{align}
    \phi_{(m, a')}(s, a) = W_{sm} \mathbf{1}[a = a'],
    \qquad
    \psi_i(s) = W_{si}.
    \label{eq:soft_agg_basis}
\end{align}
Since $W \geq 0$, non-negative coefficients $w$ yield a non-negative occupancy. However, direct projection does \textbf{not preserve} the
standard MDP flow constraints. For soft memberships, we replace $W^\top W$ with $\operatorname{diag}(\mu)$ to obtain a valid aggregated MDP, giving:
\begin{align}\label{eq:aggregated_mdp}
\hat r_{m,a'}=\frac{\sum_s W_{sm}r_{s,a'}}{\mu_m},\quad
\hat P(n'|m,a')=\frac{\sum_{s,s'}W_{sm}P(s'|s,a')W_{s'n'}}{\mu_m},\quad
\hat\gamma_m=\sum_s W_{sm}\gamma_s,
\end{align}
with cluster mass $\mu_m := \sum_s W_{sm}$ and aggregated flow constraint $\hat H_{i,(m,a')} = \delta_{i, m} - \beta \hat P(i | m, a')$. The projection collapses the $|\gS| \times |\gS|$ basis matrix to an $M \times M$ matrix with $M \ll |\gS|$, reducing the per-step gradient cost to $\mathcal{O}(M^\omega)$. Given an aggregated solution $\hat y$, we lift it to the
original state space as $y_{\text{disagg}}(s, a) = \sum_m W_{sm} w^*_{(m, a)}$. See~\Cref{app:agg_recovery} for more discussions.

\begin{algorithm}[htbp]
\caption{Decision-focused learning with state aggregation.}
\label{alg:main}
\begin{algorithmic}[1]
\Require training data $\mathcal D$, predictor $m_w$,
aggregation parameters $W_\xi$, number of aggregate states $M$,
step size $\eta$, penalty $\rho$, number of sketches $K$,
compression ratio $\alpha$
\State Precompute the optimal true-MDP dual solutions for the training data
\For{$t=1,2,\dots$}
  \State Sample $(x,\theta^*)\sim\mathcal D$;
  predict $\theta\gets m_w(x)$, construct $r_\theta,P_\theta$
  \Statex \hspace{\algorithmicindent}\textit{\# Reduce the LP dimension}
  \State $(\hat r,\hat P,\hat\gamma)\gets
  \Call{WeightedAggregate}{r_\theta,P_\theta,\gamma,W_\xi}$
  \Comment{\Cref{eq:aggregated_mdp}}
  \Statex \hspace{\algorithmicindent}\textit{\# Mitigate gradient jumps}
  \State $\hat y^{*(1)},\dots,\hat y^{*(K)}\gets
  \Call{SketchAndSolve}{(\hat r,\hat P,\hat\gamma),K,\alpha}$
  \Comment{\Cref{eq:sketch_lp}}
  \State $\bar{y}\gets
  K^{-1}\sum_{j=1}^{K}\hat y^{*(j)}$
  \Comment{average sketched sol.}
  \Statex \hspace{\algorithmicindent}\textit{\# Correct loss bias}
  \State $\ell\gets
  \Call{AugLagrangianWithLift}{\bar{y},W_\xi,\theta^*,\nu^*,\rho}$
  \Comment{\Cref{eq:dfl_penalty}}
  \State $(g_w,g_\xi)\gets\Call{BasisBackward}{\ell}$
  \Comment{\Cref{eq:jaco} + autograd}
  \State $w\gets w-\eta g_w$;
  $\xi\gets\xi-\eta g_\xi$
\EndFor
\State At inference, solve without sketching,
lift the occupancy, and normalize over actions
\end{algorithmic}
\end{algorithm}

\subsection{Function Approximation for General State MDPs}
For continuous state spaces, we use neural networks~\citep{hu2025primal}
to represent occupancy features $\Phi$ and test functions $\Psi$,
yielding a finite-dimensional LP.
We learn these features by backpropagating through the LP
with our basis-based gradient.

The projected LP coefficients involve state-space integrals
that generally have no closed form.
We estimate them from offline transitions
$\{(s_t,a_t,r_t,s_{t+1})\}_{t=1}^{N}$
and initial states $\{s_0^{(k)}\}_{k=1}^{N_0}$ drawn from $\gamma$.
In particular, the transition term is estimated as
\begin{align*}
    (\Psi^\top P\Phi)_{ij}
    \approx \frac{1}{N}\sum_{t=1}^{N}
    \psi_i(s_{t+1})\phi_j(s_t,a_t).
\end{align*}
Each transition pairs an occupancy feature at the current
state-action pair with a test function at the observed next state.
This replaces an expectation over the unknown transition dynamics
with a sample average, so no learned transition model is required.
The remaining coefficients and sampling convention are given
in \Cref{app:projected_lp}.
We train $\Phi$ and $\Psi$ by backpropagating through these
estimates and the LP using our basis-based gradient.




%% file: 05_experiments.tex
\section{Experiments}\label{sec:exp}
We evaluate our methods on three tasks that cover both finite and general-state MDPs. For all tasks, we compare the following methods:
\begin{itemize}[left=1em, nosep]
    \item \textbf{Two-stage}: a baseline that trains the predictive model with MSE on the MDP parameters and then solves the MDP at inference; no decision signal is passed back to the predictor.
    \item \textbf{DFL-QP}: the existing KKT-based DFL gradient~\citep{wang2021learning}, obtained by differentiating through a quadratic-programming relaxation of the Bellman optimality conditions, with the QP solved by \texttt{qpth}~\citep{amos2017optnet}. The relaxation is realized by adding a quadratic regularizer $\eta\|y^*_\theta\|^2$ to the \LP objective.
    \item \textbf{DFL-LP}: our closed-form gradient~(\Cref{eq:dfl_gradient}) derived from the occupancy-measure \LP.\
    \item {\textbf{DFL-Feas}: the feasibility-aware constrained-DFL baseline of \citet{mandi2026feasibilityaware}, adapted to our occupancy \LP, which penalizes violations of the true flow constraints during training}.
    \item \textbf{DFL-Sketch}: DFL-LP augmented with random row sketching and averaging of the flow constraints. This needs to solve multiple instances, but they can be easily parallelized, and each one only uses a small fraction of constraints (e.g., 10\%). 
    \item \textbf{DFL-SA}: DFL-Sketch combined with soft state aggregation of size $M$ (finite state) or linear function approximation of dimension $d$ (continuous state).
\end{itemize}
All numbers are averaged over $10$ random seeds, with shaded regions and error bars indicating the standard deviation.

\subsection{Inventory Problem}
We first consider a finite-state inventory MDP. In this task, the state $s$ denotes the net inventory at the beginning of each stage
and takes values in $\gS = \{-b_{\max},\ldots,s_{\max}\}$,
where negative values represent backlogged demand of up to $b_{\max}$ units.
The action $a \in \gA = \{0,\ldots,a_{\max}\}$ denotes the order quantity. The lead time is deterministic and equal to zero, meaning that any placed order is received immediately. Demand $D \sim \mathrm{Poisson}(\lambda)$ is a random variable following a Poisson distribution with an unknown parameter $\lambda$. The stage reward is $r(s,a;D) = p \min\{(s+a)^+, D\} - c a$, and the next state is $s' = \Pi_{[-b_{\max}, s_{max}]}(s + a - D)$. The selling price $p$ and the unit ordering cost $c$ are known constants. The objective is to maximize the total expected profit.

Let $f_{\lambda}(d)$ denote the probability mass function of a Poisson distribution with parameter $\lambda$ evaluated at $d$, with the convention $f_\lambda(d) = 0$ for $d < 0$ and empty sums equal to zero.  
The transition probabilities are defined as follows:
\begin{align*}
    P_{\lambda}(s' | s,a)
    &= 
    \begin{cases}
        \displaystyle \sum_{d=s+a+b_{\max}}^{\infty} f_{\lambda}(d)
        & \text{for } s' = -b_{\max}, \\[6pt]
        f_{\lambda}(s+a-s')
        & \text{for } -b_{\max} < s' < s_{\max}, \\[6pt]
        \displaystyle  \sum_{d=0}^{s+a-s_{\max}} f_{\lambda}(d)
        & \text{for } s'=s_{\max}
    \end{cases}.
\end{align*}

\subsection{Cliff Walking} \label{sec:exp_cw}
  We then consider a stochastic $4{\times}12$ Cliff-Walking grid~\citep{sutton2018reinforcement}
  ($S=48$, $|\mathcal{A}|=4$, $\beta=0.95$), with start in the lower-left and
  goal in the lower-right. Each nonterminal transition has a reward $-1$; entering
  the cliff gives a reward $-50$ and resets the agent to start; reaching the goal
  terminates with a reward $0$. Dynamics depend on a latent weather regime
  $z\in\{0,1,2,3\}$ (\textsc{Clear}, \textsc{Ice-West}, \textsc{Ice-Mid},
  \textsc{Ice-East}), which determines slip probabilities over outcomes
  $\mathcal{O}=\{\mathrm{intended},\mathrm{left},\mathrm{right},\mathrm{back},
  \mathrm{stay}\}$. The transition kernel marginalizes the regime-dependent slip outcome through a deterministic grid update, $P_z(s'| s,a) = \sum_{o\in\mathcal{O}} p_z(o| s,a),\mathbf{1}\{s'=F(s,a,o)\}$, where $p_z(o| s,a)$ is the slip-outcome distribution under regime $z$ and $F(s,a,o)$ is the deterministic grid update (boundary, cliff-reset, and goal rules included).
  
  Under \textsc{Clear}, the short bottom-row path is comparatively safe; under
  \textsc{Ice}, elevated slip at six volatile cells near the cliff makes this
  path risky and often favors upper-row detours. Adjacent ice regimes have
  overlapping weather-feature distributions, so the learner must infer the
  hazardous region from noisy features. Full details are in
  \Cref{app:cliffwalking}.

\subsection{Continuous State MDP} \label{sec:exp_cartpole}
We finally consider the continuous-state CartPole problem~\citep{brockman2016openai}, with state $s = (x, \dot x, \phi, \dot\phi) \in \R^4$, action space $\gA = \{0, 1\}$, reward $r_t = \cos(\phi_t)$, and horizon $T=200$. The transition kernel $P(s' | s, a)$ is given by the standard CartPole rigid-body dynamics integrated by one Euler step of size $\mathrm{d}t = 0.02$. The transition is unknown to the learner; it is observed only through an offline buffer of transitions $\{(s_t, a_t, r_t, s_{t+1})\}_{t=1}^N$ collected by a behavior policy.

The basis $\Phi$ and test functions $\Psi$ are feedforward neural networks trained jointly with the predictor by backpropagating~\Cref{eq:dfl_gradient}. Since the true occupancy \LP is infinite-dimensional, we use a relatively large feature dimension ($d=1024$) for DFL-LP and DFL-QP, and $d \in \{30, 60, 90\}$ for DFL-SA. Because the reward is action-independent, we apply a behavior-cloning reward shaping $r_t + \alpha\,\pi_{\mathrm{bc}}(a_t | s_t)$ with $\alpha=2$ on the LP forward solve while keeping the regret loss on the raw reward; the two-stage baseline shares the same BC-pretrained features and LP-based policy extraction, differing only in that the DFL gradient phase is omitted. Full details, including the Monte Carlo estimators for the LP coefficients and an additional model-based RL baseline, are deferred to~\Cref{app:cartpole}.

\section{Discussion of Experimental Results}
For finite-state tasks, the occupancy \LP has $|\gS||\gA|$
variables and $|\gS|$
flow constraints, giving $36\times31=1116$ variables for Inventory and
$48\times4=192$ for Cliff Walking.
We then compare the methods using training reward, test regret
(lower is better), and total wall-clock training time,
including forward optimization and backpropagation.

\paragraph{Decision quality.}
On all three tasks, the best-performing DFL variants achieve lower mean test regret than the two-stage baseline (\Cref{fig:learning_curve,fig:decision_quality}).
DFL-Sketch improves upon DFL-LP on Inventory and CartPole; on CartPole, it reduces mean regret from approximately $24$ to $9$, consistent with the benefit of smoothing gradients across LP basis changes.
With sufficiently large aggregation sizes, DFL-SA achieves regret comparable to DFL-QP on Inventory and Cliff Walking and lower mean regret on CartPole, while solving substantially smaller optimization problems.
On CartPole, DFL-SA ($d{=}90$) reduces mean regret to approximately $3$, compared with $10$ for DFL-QP and $45$ for two-stage.
The gains over two-stage are modest on Inventory, where the baseline is already competitive.
On Cliff Walking, DFL-SA also achieves lower mean regret under the shifted \textsc{Ice-Mid} test distribution, whereas DFL-LP provides only a small improvement.
Together, these results support sketching and learned aggregation as effective ways to improve the decision quality of LP-based learning across finite and continuous-state MDPs.

\begin{figure}[htbp]
  \includegraphics[width=\linewidth]{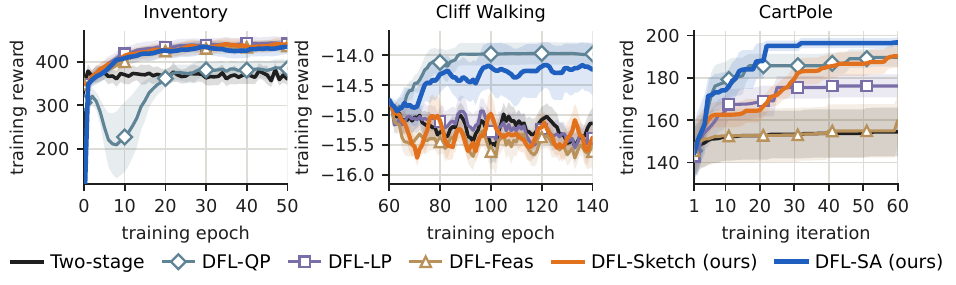}
  \vspace{-22pt}
  \caption{Training reward across epochs on three tasks. DFL-QP and DFL-SA achieve the highest training rewards, while DFL-Sketch and the other LP-based variants are generally competitive with the two-stage baseline.} 
  \label{fig:learning_curve}
\end{figure}

\begin{figure}[htbp]
  \includegraphics[width=\linewidth]{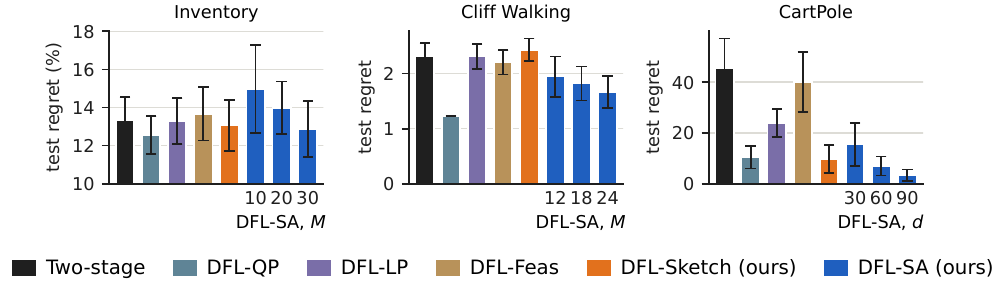}
  \vspace{-22pt}
  \caption{Test regret on three tasks (lower is better). DFL-Sketch improves over DFL-LP on Inventory and CartPole, showing the benefit of smoothing across basis changes. DFL-SA further achieves regret close to DFL-QP at substantially lower cost.}
  \label{fig:decision_quality}
\end{figure}

\begin{figure}[htbp]
  \includegraphics[width=\linewidth]{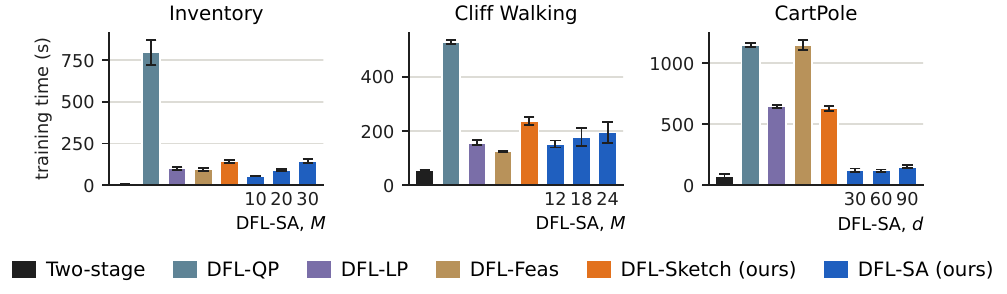}
  \vspace{-22pt}
  \caption{Total training time for different tasks. DFL-QP is the most expensive method because it differentiates through the full KKT system. DFL-Sketch adds overhead from solving multiple sketched \LPs, while DFL-SA reduces this cost through state aggregation.}
  \label{fig:computation_cost}
\end{figure}

\paragraph{Computation cost.}
\Cref{fig:computation_cost} shows the computational benefits of LP-based differentiation and state aggregation.
DFL-LP replaces differentiation through the full KKT system with a smaller basis solve, reducing training time by about $8\times$ relative to DFL-QP on Inventory.
DFL-Sketch combines gradients from multiple sketched LPs to smooth basis changes, with additional computational cost that varies across tasks.
DFL-SA further reduces the size of the basis system through learned aggregation, achieving strong decision quality at substantially lower cost than DFL-QP.
On CartPole, DFL-SA ($d{=}90$) trains about $4\times$ faster than DFL-LP and $7.5\times$ faster than DFL-QP while achieving the lowest mean test regret.
This advantage grows in the larger Inventory experiments: with $|\gS|{=}101$, DFL-SA ($M{=}30$) achieves the lowest mean regret while training approximately $40\times$ faster than DFL-QP.

\paragraph{Effect of the state aggregation size.}
\Cref{fig:decision_quality} shows that increasing the aggregation size reduces mean test regret across the evaluated configurations.
On Inventory, increasing $M$ from $10$ to $30$ lowers regret from $15\%$ to $13\%$, approaching DFL-QP.
Cliff Walking follows a similar trend, with larger aggregation sizes narrowing the gap to DFL-QP.
On CartPole, increasing $d$ from $30$ to $90$ reduces mean regret from $15$ to $3$, illustrating the value of a richer learned representation.
These results suggest that larger aggregate representations retain more information relevant to policy selection, while smaller representations reduce computation.
Aggregation size therefore provides a simple way to balance decision quality and computational cost.
Training time generally increases with aggregation size, although the CartPole measurements are not strictly monotone because wall-clock cost also depends on solver behavior (\Cref{fig:computation_cost}).

\paragraph{Other ablation studies} We further examine scalability to larger Inventory state spaces, sensitivity to the feasibility penalty $\rho$, and the role of the diagonal projection in soft aggregation in \Cref{app:scaling}--\Cref{app:ablation_proj}.
These experiments complement the main results by assessing computational scaling, sensitivity to the surrogate penalty, and the contribution of the MDP-structured aggregation.

%% file: 06_conclusion.tex
\section{Conclusion}
We present a differentiable \LP layer for decision-focused learning in MDPs, with a closed-form gradient requiring only a $|\gS|\times|\gS|$ basis solve.
An augmented Lagrangian surrogate corrects the bias from predicted-dynamics infeasibility, while random row sketching smooths gradients across basis boundaries.
Learned state aggregation reduces the basis size to $M\ll|\gS|$, and function approximation extends the approach to continuous states.
Experiments on Inventory, Cliff Walking, and CartPole show lower regret than two-stage and decision quality comparable to or better than KKT-based DFL at substantially lower computational cost.

%% file: appendix.tex
\section{Appendix}


\subsection{Sample-based estimators for the projected LP}
\label{app:projected_lp}

Recall the projected LP (\ref{eq:projected_lp}) in continuous-state MDPs, with coefficients
\begin{align*}
    A = \Psi^\top (B - \beta P)\Phi \in \R^{m \times d}, \quad
    b = \Psi^\top \gamma \in \R^m, \quad
    (\tilde{r}_\theta)_j = \int_{\gS \times \gA} r_\theta(s, a) \phi_j(s, a) ds da,
\end{align*}
where $(B y)(s) = \int_\gA y(s, a) da$ and $(P y)(s) = \int_{\gS \times \gA} P(s \mid s', a') y(s', a') ds' da'$. We now derive Monte Carlo estimators for each coefficient given offline transitions $\{(s_t, a_t, r_t, s_{t+1})\}_{t=1}^N$ sampled from a behavior policy and $N_0$ initial states $\{s_0^{(k)}\}_{k=1}^{N_0}$ drawn from $\gamma$.

We work with a \emph{behavior-weighted} version of the projected LP. We write the occupancy relative to the data distribution, $y(s,a) = \mu_b(s,a)\,\Phi(s,a)^\top w$, and keep the test functions $\Psi$ unweighted. This avoids dividing by $\mu_b$ in the empirical sums, which is unstable when $\mu_b$ is small; the price is that we solve a behavior-weighted version of~\ref{eq:projected_lp}, not its Lebesgue-weighted counterpart. Under this convention, the estimators for $\Psi^\top P\Phi$, $b$, and $\tilde r_\theta$ below are plain empirical averages. The $B$ block needs extra care, which we explain below.

\paragraph{Estimator for $\Psi^\top P \Phi$.}
Under the behavior-weighted inner product,
\begin{align*}
    (\Psi^\top P \Phi)_{ij}
    &= \int_{\gS \times \gA} \mu_b(s', a') \phi_j(s', a') \left( \int_\gS \psi_i(s) P(s \mid s', a') ds \right) ds' da' \\
    &= \E_{(s', a') \sim \mu_b, s \sim P(\cdot \mid s', a')} \left[ \psi_i(s) \phi_j(s', a') \right].
\end{align*}
Replacing the expectation with its empirical average over the transitions $(s_t, a_t, s_{t+1})$ gives
\begin{align*}
    (\Psi^\top P \Phi)_{ij} \approx \frac{1}{N} \sum_{t=1}^N \psi_i(s_{t+1}) \phi_j(s_t, a_t).
\end{align*}

\paragraph{Estimator for $\Psi^\top B \Phi$.}
Under the same convention,
\begin{align*}
    (\Psi^\top B \Phi)_{ij}
    = \int_\gS \psi_i(s) \sum_{a \in \gA} \mu_b(s,a) \phi_j(s, a) ds
    = \E_{(s,a) \sim \mu_b} \left[ \psi_i(s) \phi_j(s, a) \right]
    \approx \frac{1}{N} \sum_{t=1}^{N} \psi_i(s_t) \phi_j(s_t, a_t),
\end{align*}
which uses only the logged action $a_t$. In our experiments, we instead used the all-action sum
\begin{align*}
    (\Psi^\top B \Phi)_{ij} \approx \frac{1}{N} \sum_{t=1}^{N} \psi_i(s_t) \sum_{a \in \gA} \phi_j(s_t, a),
\end{align*}
which weights the $B$ block by the state marginal $p_b(s)$ instead of $\mu_b(s,a)$. Compared with the consistent estimator, this effectively replaces the discount $\beta$ in each column $(s,a)$ by $\beta\,\pi_b(a \mid s)$. The LP used for CartPole is therefore a heuristic surrogate of the behavior-weighted LP, not an exact estimate of it. The consistent estimator above removes this mismatch; with it, the policy is extracted as $\pi(a \mid s) \propto \pi_b(a \mid s)\,\Phi(s,a)^\top w^*$.

\paragraph{Estimator for $b$.}
Since $b_i = \int_\gS \psi_i(s) \gamma(s) ds = \E_{s_0 \sim \gamma}[\psi_i(s_0)]$, the initial-state samples give
\begin{align*}
    b_i \approx \frac{1}{N_0} \sum_{k=1}^{N_0} \psi_i(s_0^{(k)}).
\end{align*}

\paragraph{Estimator for $\tilde{r}_\theta$.}
Following the same recipe,
\begin{align*}
    (\tilde{r}_\theta)_j
    = \E_{(s, a) \sim \mu_b}\left[ r_\theta(s, a) \phi_j(s, a) \right]
    \approx \frac{1}{N} \sum_{t=1}^N r_\theta(s_t, a_t) \phi_j(s_t, a_t),
\end{align*}
which is again the plain empirical average under the behavior-weighted convention.

The matrix $A$ and vector $b$ depend on the predicted dynamics only through these empirical sums, so backpropagating through~\Cref{eq:dfl_gradient} reduces to backpropagating through the per-sample features $\psi_i, \phi_j$, a standard automatic-differentiation operation when $\Phi, \Psi$ are neural networks.

\subsection{Derive discrete-state soft aggregation from continuous-state function approximation}\label{app:agg_recovery}

\paragraph{Learnable membership construction.}
We parameterize the membership matrix through aggregation parameters $\xi = (E, \Theta_W)$, where $E \in \R^{S \times d_e}$ is a state embedding matrix and $\Theta_W$ maps features to cluster logits. Specifically, we first compute a transition-aware feature for each state-action pair as the expected next-state embedding $\phi(s, a) = \sum_{s'} P(s' \mid s, a) E_{s'} \in \R^{d_e}$, then form a state-level representation by concatenating across all actions: $\bar\phi(s) = [\phi(s,1); \ldots; \phi(s,A)] \in \R^{A d_e}$. This feature encodes how each state interacts with the transition structure under every action. The soft cluster assignment is then obtained by projecting $\bar\phi(s)$ onto $M$ cluster logits via $\Theta_W$ and applying a temperature-scaled softmax:
\begin{align*}
    W_{sm}(\xi) = \frac{\exp\bigl((\Theta_W^\top \bar\phi(s))_m / \tau\bigr)}{\sum_{m'} \exp\bigl((\Theta_W^\top \bar\phi(s))_{m'} / \tau\bigr)}.
\end{align*}
Given the soft membership $W$, we construct a reduced MDP over $M$ abstract states by computing weighted averages of the original MDP quantities. 

We now verify that the basis choices $\phi_{(m, a')}(s, a) = W_{sm} \mathbf{1}[a = a']$ and $\psi_i(s) = W_{si}$ reduce the projected~\ref{eq:projected_lp} to the soft-aggregated LP. The reduction is exact for hard cluster memberships; for soft memberships, the basis-block of the constraint matrix has off-diagonal entries and an exact algebraic match to the standard aggregated flow constraint requires replacing this block by its diagonal projection (equivalently, enforcing the simplex identity $\sum_m W_{sm} = 1$ at the constraint level). We make this step explicit in the derivation below.
\paragraph{Coefficient $b$.} Direct computation gives
\begin{align*}
    b_i = \sum_s \psi_i(s) \gamma_s = \sum_s W_{si} \gamma_s = \hat\gamma_i.
\end{align*}

\paragraph{Coefficient $\tilde{r}_\theta$.} Indexing the basis by $j = (m, a')$,
\begin{align*}
    (\tilde{r}_\theta)_{(m, a')}
    = \sum_{s, a} r_\theta(s, a) W_{sm} \mathbf{1}[a = a']
    = \sum_s W_{sm} r_\theta(s, a')
    = \mu_m \hat r_{m, a'},
\end{align*}
using $\hat r_{m, a'} = \mu_m^{-1} \sum_s W_{sm} r_\theta(s, a')$ from the soft-aggregation construction.

\paragraph{Coefficient $A = \Psi^\top (B - \beta P) \Phi$.} For the $B$ block,
\begin{align*}
    (\Psi^\top B \Phi)_{i, (m, a')}
    = \sum_s W_{si} \sum_a W_{sm} \mathbf{1}[a = a']
    = \sum_s W_{si} W_{sm}.
\end{align*}
For the $P$ block,
\begin{align*}
    (\Psi^\top P \Phi)_{i, (m, a')}
    &= \sum_{s} W_{si} \sum_{s', a} P(s \mid s', a) W_{s'm} \mathbf{1}[a = a'] \\
    &= \sum_{s, s'} W_{si} P(s \mid s', a') W_{s'm}
    = \mu_m \hat P(i \mid m, a'),
\end{align*}
using $\hat P(n' \mid m, a') = \mu_m^{-1} \sum_{s, s'} W_{sm} P(s' \mid s, a') W_{s' n'}$ and renaming the dummy summation indices.

Combining the two blocks,
\begin{align*}
    A_{i, (m, a')}
    = \sum_s W_{si} W_{sm} - \beta \mu_m \hat P(i \mid m, a').
\end{align*}
For hard cluster memberships ($W$ has one-hot rows), $\sum_s W_{si} W_{sm} = \delta_{i, m} \mu_m$, so
\begin{align*}
    A_{i, (m, a')} = \mu_m \big( \delta_{i, m} - \beta \hat P(i \mid m, a') \big) = \mu_m \hat H_{i, (m, a')}.
\end{align*}
For soft memberships, direct projection does \textbf{not preserve} the
standard MDP flow constraints. We therefore replace $W^\top W$
with $\operatorname{diag}(\mu)$ so that the reduced LP describes
a valid aggregated MDP. This replacement is exact for hard
memberships and is an approximation for soft memberships.

After this replacement, the projected LP becomes
\begin{align*}
\max_{w\geq0}\quad
& \sum_{m,a}\mu_m\hat r_{m,a}w_{(m,a)} \\
\text{s.t.}\quad
& \sum_{m,a}\hat H_{i,(m,a)}\mu_m w_{(m,a)}
=\hat\gamma_i
\quad\text{for all }i.
\end{align*}
Assuming $\mu_m>0$, define the aggregated occupancy by
$\hat y(m,a):=\mu_m w_{(m,a)}$.
This change of variables gives the standard aggregated LP
\begin{align*}
\max_{\hat y\geq0}\quad
& \hat r^\top\hat y \\
\text{s.t.}\quad
& \hat H\hat y=\hat\gamma.
\end{align*}
Consequently, lifting an aggregated solution to the original
state space gives
\begin{align*}
y_{\mathrm{disagg}}(s,a)
&=\sum_m W_{sm}w_{(m,a)} \\
&=\sum_m W_{sm}\frac{\hat y(m,a)}{\mu_m}.
\end{align*}

\subsection{Proof for Proposition~\ref{prop_lp}}\label{app:pf_prop_lp}
\begin{proof}
Let $\eta^*$ denote the Lagrangian multiplier for $y \geq 0$. Then, we can write
    \begin{align*}
        L(y^*_\theta) - L(y^*_{\mathrm{true}}) &= -r_{\mathrm{true}}^\top y^*_\theta + (H_{\mathrm{true}} y^*_\theta - \gamma)^\top\nu^* + \rho \|H_{\mathrm{true}} y^*_\theta - \gamma\|^2 + r_{\mathrm{true}}^\top y^*_{\mathrm{true}} \\
        &=(H_{\mathrm{true}}^\top \nu^* - r_{\mathrm{true}})^\top y^*_\theta + \rho \|H_{\mathrm{true}}y^*_\theta - \gamma\|^2\\
        &= (\eta^*)^\top y^*_\theta + \rho \|H_{\mathrm{true}}y^*_\theta - \gamma\|^2.
    \end{align*}
    The second equality is due to strong duality, i.e., $r_{\mathrm{true}}^\top y^*_\mathrm{true} = \gamma^\top \nu^*$. By dual feasibility, we know that $(\eta^*)^\top y^*_\theta \geq 0$, and thus $L(y^*_\theta) - L(y^*_{\mathrm{true}}) \geq 0$. When the equality holds, we have both $(\eta^*)^\top y^*_\theta = 0$ and $\|H_{\mathrm{true}}y^*_\theta - \gamma\|^2 = 0$. With $y^*_\theta \geq 0$ these are the KKT conditions of the true LP, which are sufficient, so $y^*_\theta$ is optimal and $r_{\mathrm{true}}^\top y^*_\theta = r_{\mathrm{true}}^\top y^*_{\mathrm{true}}$. The converse is complementary slackness.
\end{proof}

\section{Details of Experiments}
\subsection{Compute resources, hyperparameters and model architectures}\label{app:hyperparameters}
All experiments in this paper were run on a single shared workstation with two Intel Xeon Gold 6548Y+ processors (128 logical cores total, 2.5 GHz base) and 1.5 TB of system memory.

\Cref{tab:hyperparameters} reports the hyperparameters used in each experiment. All methods on a given task share the same predictor architecture, optimizer, and data; only the gradient (LP, QP, sketch, or none) differs. Numbers are averaged over $10$ random seeds.

\begin{table}[htbp]
\centering
\small
\caption{Key hyperparameters and model architectures for each task.}
\label{tab:hyperparameters}
\begin{tabular}{lccc}
\toprule
& \textbf{Inventory} & \textbf{Cliff Walking} & \textbf{CartPole} \\
\midrule
\multicolumn{4}{l}{\emph{MDP}} \\
State space size $|\gS|$ & $36$ ($s_{\max}=30$, $b_{\max}=5$) & $48$ ($4{\times}12$ grid) & continuous, $\R^4$ \\
Action space size $|\gA|$ & $31$ ($a_{\max}=30$) & $4$ & $2$ \\
Discount $\beta$ & $0.95$ & $0.95$ & $0.99$ \\
Horizon $T$ & N/A & N/A & $200$ \\
\midrule
\multicolumn{4}{l}{\emph{Predictor / features}} \\
Architecture & MLP & MLP & MLP \\
Hidden layers & $(32, 32)$ & $(64,64)$ & $(128, 128)$ \\
Activation & ReLU & ReLU + Tanh & ReLU \\
\midrule
\multicolumn{4}{l}{\emph{Optimization}} \\
Optimizer & Adam & Adam & Adam \\
Learning rate & $10^{-2}$ & $3{\times}10^{-3}$ & $10^{-3}$ \\
Batch size & full batch & $32$ & $4096$ \\
Pretraining epochs & $30$ & $60$ & $70$ \\
Training epochs / iters. & $50$ & $140$ & $60$ \\
Random seeds & $10$ & $10$ & $10$ \\
\midrule
\multicolumn{4}{l}{\emph{DFL surrogate}} \\
Aug.\ Lagrangian $\rho$ (DFL-LP) & $10$ & $0.1$ & $1.0$ \\
Aug.\ Lagrangian $\rho$ (DFL-Sketch) & $0.1$ & $1.0$ & $1.0$ \\
Aug.\ Lagrangian $\rho$ (DFL-QP) & $10^4$ & $0.1$ & $1.0$ \\
QP regularization $\eta$ & $10^{-3}$ & $0.1$ & $10^{-2}$ \\
Sketch ensemble size $K$ & $8$ & $8$ & $8$ \\
Sketch row-keep ratio $\alpha$ & $0.1$ & $0.2$ & $0.2$ \\
LP basis ridge & $10^{-6}$ & none & $10^{-4}$ \\
\midrule
\multicolumn{4}{l}{\emph{State aggregation}} \\
Aggregation sizes $M$ or $d$ & $M \in \{10, 20, 30\}$ & $M \in \{12, 18, 24\}$ & $d \in \{30, 60, 90\}$ \\
Softmax temperature $\tau$ & $0.5$ & $2.0$ & $0.5$ \\
\midrule
\multicolumn{4}{l}{\emph{Data and evaluation}} \\
Train samples & $10$ & $100$ & $500$ trajectories \\
Test samples & $50$ & $100$ & $100$ episodes \\
\bottomrule
\end{tabular}
\end{table}

\subsection{Details for Cliff Walking MDP Task}\label{app:cliffwalking}

  \paragraph{Slip parameterization.}
  The latent weather regime determines a compact slip tensor rather than an
  unstructured transition matrix. The compact tensor has one ambient slot shared
  by all states and six volatile-cell slots that override the ambient slip
  distribution at selected grid cells near the cliff. Each slot contains a
  categorical distribution over the five movement outcomes for each action. This
  keeps the prediction problem structured while still allowing the hazardous
  region of the grid to change with the weather regime.

  \paragraph{Transition construction.}
  For each state-action pair, the compact slip distribution is expanded into the
  full transition kernel by applying the five slip outcomes to the chosen action.
  Each outcome deterministically maps the agent to a candidate next cell after
  enforcing grid boundaries, cliff resets, and goal termination. The probability
  of a next state is the sum of the probabilities of all slip outcomes that land
  in that state. Thus the learned object is not an arbitrary $48\times 4\times 48$
  transition tensor, but a structured slip model that induces the transition
  kernel used by the planner.

  \paragraph{Feature ambiguity and shift.}
  Weather features are generated from regime-specific Gaussian centers. To make
  the shifted test regime difficult to distinguish from the dominant training
  regime, we make the \textsc{Clear} and \textsc{Ice-Mid} centers identical
  (center pair $(0,2)$ with zero gap). Training samples are drawn from the
  skewed prior $15{:}1{:}1{:}1$, whereas test samples are drawn only from
  \textsc{Ice-Mid}. This creates a setting where a predictor can perform well
  on the training distribution while still selecting a poor policy under the
  test dynamics.

  \paragraph{Policy evaluation.}
  All methods plan using the transition kernel induced by their predicted slip
  distribution, but evaluation is always performed in the true MDP for the
  sample. Thus regret measures the downstream cost of selecting a policy under
  misspecified dynamics, not the prediction error itself. The oracle value
  $v^*$ is obtained by solving the occupancy LP with the true transition kernel,
  and the predicted-policy value is computed by evaluating the policy selected
  from $\hat P$ in the true transition kernel.

  \paragraph{State aggregation and sketching.}
  For DFL-SA, soft state aggregation maps the original $48$ states to
  $M \in \{12,18,24\}$ aggregate states before the LP solve, then disaggregates
  the aggregate occupancy measure back to the original state-action space for
  evaluation. For DFL-Sketch, no state aggregation is used. Instead, each
  training step averages decision gradients over $K=8$ randomized LP sketches,
  each retaining an $\alpha=0.2$ fraction of the LP rows.

\Cref{tab:cw_paired} reports an earlier experimental configuration, with a different regret scale and aggregation grid ($M\in\{6,12,24\}$) from~\Cref{fig:decision_quality} ($M\in\{12,18,24\}$). In this earlier configuration, DFL-QP and all listed DFL-SA variants win on all ten paired seeds. These win counts do not establish per-seed wins or statistical significance for the current main-figure configuration.

  \begin{table}[!htbp]
  \centering
  \caption{Paired Cliff Walking test regret over 10 random seeds. Lower regret is
  better. Win counts compare each method against the two-stage baseline on the
  same seed. SE denotes the standard error of the mean.}
  \label{tab:cw_paired}
  \begin{tabular}{lccc}
  \toprule
  Method & Mean regret $\downarrow$ & SE & Wins vs. TS \\
  \midrule
  Two-stage & 14.75 & 0.29 & -- \\
  DFL-LP & 14.14 & 0.18 & 7/10 \\
  DFL-Sketch & 11.34 & 1.44 & 8/10 \\
  DFL-QP & 9.53 & 0.84 & 10/10 \\
  DFL-SA ($M=6$) & 7.21 & 0.15 & 10/10 \\
  DFL-SA ($M=12$) & 7.76 & 0.15 & 10/10 \\
  DFL-SA ($M=24$) & 7.87 & 0.64 & 10/10 \\
  \bottomrule
  \end{tabular}
  \end{table}

\subsection{Details for Continuous MDP Task}\label{app:cartpole}
We expand the experimental setup of~\Cref{sec:exp_cartpole}.

\paragraph{Behavior-cloning reward shaping.} On CartPole, the reward $r_t = \cos(\phi_t)$ is independent of the action, so the LP cannot rank the two actions by reward alone. To break this symmetry, we add a behavior-cloning bonus to the forward solve. A linear head $w_{\mathrm{bc}}$ on top of the state features $\Phi$ is trained to predict the behavior actions by cross-entropy, giving $\pi_{\mathrm{bc}}(a \mid s) = \mathrm{softmax}(w_{\mathrm{bc}}^\top \phi(s))$. The LP forward solve uses the shaped reward $r_t + \alpha\,\pi_{\mathrm{bc}}(a_t \mid s_t)$ with $\alpha = 2$, while the DFL loss~\Cref{eq:dfl_penalty} is still evaluated with the raw reward $r_t$, so the regret being optimized is unchanged.

\paragraph{Two-stage baseline.} On CartPole, the two-stage baseline shares the BC-pretrained features $\Phi$, the shaped reward, and the LP-based policy extraction~\ref{eq:projected_lp} with DFL. The only difference is that the DFL gradient phase is omitted: $\Phi$ is held fixed at its BC-pretrained values throughout. This isolates the contribution of decision-aware feature learning while controlling for the BC pretraining and reward shaping that DFL also receives.

\paragraph{Sample-based LP coefficients.} Given $N$ offline transitions $(s_t, a_t, r_t, s_{t+1})_{t=1}^N$ collected by a behavior policy and $N_0$ initial states $\{s_0^{(k)}\}_{k=1}^{N_0}$ drawn from $\gamma$, the coefficients of~\ref{eq:projected_lp} are computed by the Monte Carlo estimators in~\Cref{app:projected_lp}. The transition kernel $P$ enters only through the empirical next states $s_{t+1}$, so no learned model of $P$ or $r$ is required.

In addition, to address the additional concern of whether DFL beats a more conventional model-based reinforcement learning (MBRL) baseline, and whether such a baseline benefits from receiving the same imitation signal that DFL does, we report two further variants on CartPole:
\begin{itemize}[left=1em, nosep]
  \item \textbf{MBRL}: fit a neural dynamics model $\hat P_\theta$ and reward model $\hat r_\theta$ to the offline trajectories by maximum likelihood, then run DQN~\citep{van2016deep} on the learned model. No BC signal is used anywhere.
  \item \textbf{MBRL + BC shaping}: same pipeline as above, but the DQN reward used in the Bellman target is $r_t + \alpha\, \pi_{\mathrm{bc}}(a_t \mid s_t)$ with $\alpha = 2$, where $\pi_{\mathrm{bc}}$ is trained on the same offline data with the same architecture and pretraining schedule as the BC head used by DFL. The DQN replay is pre-shaped before insertion, so the Q-network learns under the shaped reward, while evaluation on the real environment uses the raw reward.
\end{itemize}

\Cref{tab:mbrl_baseline} reports best-so-far CartPole rewards. The additional MBRL runs use the biased behavior policy ($\theta$-weight $0.013$), $100$ offline trajectories, $60$ outer iterations, a $(32,32)$ Q-network, $400$ world-model epochs, and $5$ seeds. The main LP experiments use the configuration in~\Cref{tab:hyperparameters}. Thus this table is a descriptive comparison across the reported settings, not a matched-data ablation establishing a causal effect of BC shaping. The MBRL rows have lower observed means and larger across-seed standard deviations.

\begin{table}[!htbp]
\centering
\small
\caption{CartPole best-so-far reward (mean $\pm$ standard deviation). The MBRL rows use $5$ seeds and the additional configuration described above; the table does not establish a matched-data comparison.}
\begin{tabular}{lcc}
\toprule
Method & Best-so-far reward \\
\midrule
DFL-LP                                  & $170.5 \pm 32.7$\\
DFL-QP                                  & $192.4 \pm 12.6$\\
Two-stage (BC + LP, no DFL gradient)    & $168.1 \pm 24.8$\\
\midrule
MBRL (no BC anywhere)                   & $136.8 \pm 66.6$ \\
MBRL + BC reward shaping ($\alpha = 2$) & $127.3 \pm 61.3$\\
\bottomrule
\end{tabular}

\label{tab:mbrl_baseline}
\end{table}

\subsection{Scaling to larger inventory MDPs}\label{app:scaling}
To probe scalability beyond the $|\gS| = 36$ inventory MDP of~\Cref{sec:exp} ($|\gS||\gA| = 36 \times 31 = 1116$ occupancy variables), we increase the inventory state size to $|\gS| = 51$ and $|\gS| = 101$ while keeping $|\gA| = 31$, giving $|\gS||\gA| = 1581$ and $3131$ occupancy variables, respectively. \Cref{tab:scaling_51,tab:scaling_101} report test regret and total training time over $10$ seeds.

\begin{table}[!htbp]
\centering
\small
\caption{Larger inventory MDP, $|\gS| \times |\gA| = 51 \times 31 = 1581$ (mean $\pm$ standard error over $10$ seeds).}
\label{tab:scaling_51}
\begin{tabular}{lcc}
\toprule
Method & Test regret $\downarrow$ & Total (s) \\
\midrule
Two-stage & $24.537 \pm 4.573$ & $68.2 \pm 11.2$ \\
DFL-QP & $19.121 \pm 2.156$ & $3649.4 \pm 165.5$ \\
DFL-LP & $23.056 \pm 3.294$ & $972.0 \pm 34.0$ \\
DFL-Feas & $22.834 \pm 3.414$ & $871.0 \pm 15.7$ \\
DFL-Sketch (Ours) & $19.434 \pm 2.363$ & $897.7 \pm 33.2$ \\
DFL-SA ($M = 10$) (Ours) & $22.659 \pm 2.589$ & $226.5 \pm 8.0$ \\
DFL-SA ($M = 20$) (Ours) & $21.159 \pm 2.883$ & $348.5 \pm 11.4$ \\
DFL-SA ($M = 30$) (Ours) & $19.249 \pm 2.762$ & $847.8 \pm 15.1$ \\
\bottomrule
\end{tabular}
\end{table}

\begin{table}[!htbp]
\centering
\small
\caption{Larger inventory MDP, $|\gS| \times |\gA| = 101 \times 31 = 3131$ (mean $\pm$ standard error over $10$ seeds).}
\label{tab:scaling_101}
\begin{tabular}{lcc}
\toprule
Method & Test regret $\downarrow$ & Total (s) \\
\midrule
Two-stage & $53.788 \pm 7.710$ & $227.3 \pm 6.4$ \\
DFL-QP & $44.545 \pm 5.823$ & $34791.0 \pm 756.4$ \\
DFL-LP & $40.690 \pm 4.946$ & $19915.3 \pm 527.1$ \\
DFL-Feas & $62.904 \pm 8.230$ & $19832.6 \pm 618.7$ \\
DFL-Sketch (Ours) & $48.591 \pm 5.718$ & $13790.6 \pm 513.4$ \\
DFL-SA ($M = 10$) (Ours) & $42.568 \pm 5.814$ & $540.6 \pm 3.8$ \\
DFL-SA ($M = 20$) (Ours) & $43.459 \pm 5.432$ & $605.2 \pm 6.0$ \\
DFL-SA ($M = 30$) (Ours) & $39.187 \pm 5.198$ & $862.8 \pm 12.2$ \\
\bottomrule
\end{tabular}
\end{table}

The main takeaways are: (i) at $|\gS| = 51$, DFL-SA ($M{=}30$) closely matches the mean regret of DFL-QP ($19.249$ vs.\ $19.121$) while training $4.3\times$ faster; (ii) at $|\gS| = 101$, DFL-SA ($M{=}30$) achieves the lowest mean regret of all methods and trains $40.3\times$ faster than DFL-QP and $23.1\times$ faster than DFL-LP; and (iii) across every inventory size, the best DFL-SA variant achieves lower mean regret than the two-stage baseline. The relative speedup of DFL-SA thus grows with the original state-space size, consistent with the smaller basis-solve dimension, although wall-clock time also includes matrix construction and forward optimization.

\subsection{Sensitivity to the feasibility penalty $\rho$}\label{app:rho_sensitivity}
We select $\rho$ separately for each method by validation performance. DFL-QP requires a larger $\rho$ on the inventory MDP because its quadratic regularization changes the scale and sensitivity of the resulting occupancy solution, so a stronger feasibility penalty is needed to balance the feasibility and decision-quality gradients. \Cref{tab:rho_sensitivity} reports a sensitivity analysis for DFL-QP on the inventory MDP, showing both test regret and the flow-constraint violation MSE.

\begin{table}[!htbp]
\centering
\small
\caption{Sensitivity of DFL-QP on the inventory MDP to the penalty weight $\rho$ (mean $\pm$ standard error over $10$ seeds).}
\label{tab:rho_sensitivity}
\begin{tabular}{lcc}
\toprule
$\rho$ & Test regret $\downarrow$ & Flow-constraint MSE \\
\midrule
$1$ & $15.629 \pm 2.296$ & $23.98$ \\
$10$ & $14.845 \pm 1.877$ & $23.07$ \\
$10^2$ & $14.124 \pm 1.808$ & $21.26$ \\
$10^3$ & $12.819 \pm 1.185$ & $18.00$ \\
$10^4$ (used in the paper) & $12.552 \pm 0.991$ & $17.82$ \\
$10^5$ & $13.305 \pm 1.458$ & $17.26$ \\
\bottomrule
\end{tabular}
\end{table}

As $\rho$ increases, the flow-constraint MSE decreases monotonically, while the regret improves up to $\rho = 10^4$ and slightly worsens at $\rho = 10^5$. The performance is relatively stable for $\rho \in [10^3, 10^5]$, and we select $\rho = 10^4$ by validation regret.

\subsection{Ablation of the diagonal projection in soft aggregation}\label{app:ablation_proj}
\Cref{sec:state_agg} replaces the $W^\top W$ block of the exactly projected constraint matrix by its diagonal mass approximation $\mathrm{diag}(\mu)$ to obtain an MDP-structured aggregated LP. \Cref{tab:projection_ablation} compares, on the inventory MDP with $M = 10$: (i) \emph{Hard}, non-differentiable hard clustering (one-hot memberships, for which the reduction is exact but no gradient flows through the membership matrix); (ii) \emph{Proj-LP}, the exactly projected LP without the diagonal step; and (iii) \emph{Proj-MDP}, the MDP-structured construction used by DFL-SA. ``Infeasible train solves'' is the fraction of training LP solves in which the reduced solution corresponds to an occupancy that the induced policy cannot realize.

\begin{table}[!htbp]
\centering
\small
\caption{Ablation of the diagonal projection on the inventory MDP ($M = 10$; mean $\pm$ standard error over $10$ seeds).}
\label{tab:projection_ablation}
\begin{tabular}{lcc}
\toprule
LP layer & Test regret $\downarrow$ & Infeasible train solves \\
\midrule
Hard & $63.024 \pm 4.911$ & $0.0\,\%$ \\
Proj-LP (no diagonal step) & $24.147 \pm 5.758$ & $61.8\,\%$ \\
Proj-MDP (DFL-SA) & $14.966 \pm 2.319$ & $0.0\,\%$ \\
\bottomrule
\end{tabular}
\end{table}
The MDP-structured construction has a lower reported diagnostic rate and lower mean regret in this experiment. Feasibility in the reduced MDP does not by itself establish feasibility of a lifted occupancy in the original MDP..

\section{Additional Related Works}\label{app:related_works}
\paragraph{State Aggregation in Markov Decision Processes.}
In MDP, the state space grows exponentially with the number of state variables, a phenomenon commonly referred to as the \emph{curse of dimensionality}. A vast literature has addressed the challenge of exploding state spaces in large-scale MDPs through dimensionality reduction \cite{Bertsekas_IEEE_2019}, approximate dynamic programming \cite{Powell_ADP_2007}, simulation-based algorithms \cite{Chang_SAMDP_2013}, and deep reinforcement learning \cite{Sutton_RL_2018}, along with many recent advances \cite{Bertsekas_IEEE_2019,Bertsekas_RLOC_2020,Dewanto_RL_2020,Abel_2019,NEURIPS2021_454cecc4,xue2023graph}. However, most of these approaches do not integrate naturally within the DFL framework, as the non-smooth operators underlying traditional solvers prevent end-to-end gradient-based training. One promising direction compatible with DFL is differentiable \emph{state aggregation}, in which similar states are mapped to a shared representation to construct a smaller, more tractable MDP while preserving gradient flow.

The foundational concept of state aggregation has long been a classical tool in approximate dynamic programming and reinforcement learning \cite{bertsekas1996neuro}. Broadly speaking, it reduces the dimensionality of large MDPs by collapsing groups of states into abstract representations. The central challenge is designing aggregations that preserve sufficient structural information to recover near-optimal policies. This has motivated a variety of approaches, including homomorphism-based aggregation that preserves transition and reward equivalence \citep{ravindran2003smdp}, projection-based methods that approximate the value or occupancy measure linear program on a reduced basis \citep{de2003linear}, and deep representation-learning methods for latent-state construction \citep{gelada2019deepmdp}.

Most closely related to our work is the literature on learnable soft aggregation. \citet{singh1994reinforcement} introduced soft state aggregation for reinforcement learning using a probabilistic membership matrix over states. More recently, a variety of representation-learning approaches have been proposed to learn latent state abstractions \citep{zhang2021learning}. However, prior work primarily focuses on value-function approximation within a fixed MDP. In contrast, our approach incorporates a learnable, differentiable aggregation layer directly within a predicted MDP and optimizes it end-to-end, thereby connecting representation learning with decision-focused learning.

%% file: checklist.tex
\section*{NeurIPS Paper Checklist}

\begin{enumerate}

\item {\bf Claims}
    \item[] Question: Do the main claims made in the abstract and introduction accurately reflect the paper's contributions and scope?
    \item[] Answer: \answerYes{} 
    \item[] Justification: Please see~\Cref{sec:introduction}.
    \item[] Guidelines:
    \begin{itemize}
        \item The answer \answerNA{} means that the abstract and introduction do not include the claims made in the paper.
        \item The abstract and/or introduction should clearly state the claims made, including the contributions made in the paper and important assumptions and limitations. A \answerNo{} or \answerNA{} answer to this question will not be perceived well by the reviewers. 
        \item The claims made should match theoretical and experimental results, and reflect how much the results can be expected to generalize to other settings. 
        \item It is fine to include aspirational goals as motivation as long as it is clear that these goals are not attained by the paper. 
    \end{itemize}

\item {\bf Limitations}
    \item[] Question: Does the paper discuss the limitations of the work performed by the authors?
    \item[] Answer: \answerYes{} 
    \item[] Justification: The \LP gradient is inherently discontinuous at basis boundaries, which is a hard problem to solve in practice.
    \item[] Guidelines:
    \begin{itemize}
        \item The answer \answerNA{} means that the paper has no limitation while the answer \answerNo{} means that the paper has limitations, but those are not discussed in the paper. 
        \item The authors are encouraged to create a separate ``Limitations'' section in their paper.
        \item The paper should point out any strong assumptions and how robust the results are to violations of these assumptions (e.g., independence assumptions, noiseless settings, model well-specification, asymptotic approximations only holding locally). The authors should reflect on how these assumptions might be violated in practice and what the implications would be.
        \item The authors should reflect on the scope of the claims made, e.g., if the approach was only tested on a few datasets or with a few runs. In general, empirical results often depend on implicit assumptions, which should be articulated.
        \item The authors should reflect on the factors that influence the performance of the approach. For example, a facial recognition algorithm may perform poorly when image resolution is low or images are taken in low lighting. Or a speech-to-text system might not be used reliably to provide closed captions for online lectures because it fails to handle technical jargon.
        \item The authors should discuss the computational efficiency of the proposed algorithms and how they scale with dataset size.
        \item If applicable, the authors should discuss possible limitations of their approach to address problems of privacy and fairness.
        \item While the authors might fear that complete honesty about limitations might be used by reviewers as grounds for rejection, a worse outcome might be that reviewers discover limitations that aren't acknowledged in the paper. The authors should use their best judgment and recognize that individual actions in favor of transparency play an important role in developing norms that preserve the integrity of the community. Reviewers will be specifically instructed to not penalize honesty concerning limitations.
    \end{itemize}

\item {\bf Theory assumptions and proofs}
    \item[] Question: For each theoretical result, does the paper provide the full set of assumptions and a complete (and correct) proof?
    \item[] Answer: \answerYes{} 
    \item[] Justification: The closed-form \LP-layer Jacobian and gradient (\Cref{eq:jaco,eq:dfl_gradient} in~\Cref{sec:lp_grad}) are derived under explicitly stated non-degeneracy and fixed-basis assumption.
    \item[] Guidelines:
    \begin{itemize}
        \item The answer \answerNA{} means that the paper does not include theoretical results. 
        \item All the theorems, formulas, and proofs in the paper should be numbered and cross-referenced.
        \item All assumptions should be clearly stated or referenced in the statement of any theorems.
        \item The proofs can either appear in the main paper or the supplemental material, but if they appear in the supplemental material, the authors are encouraged to provide a short proof sketch to provide intuition. 
        \item Inversely, any informal proof provided in the core of the paper should be complemented by formal proofs provided in appendix or supplemental material.
        \item Theorems and Lemmas that the proof relies upon should be properly referenced. 
    \end{itemize}

    \item {\bf Experimental result reproducibility}
    \item[] Question: Does the paper fully disclose all the information needed to reproduce the main experimental results of the paper to the extent that it affects the main claims and/or conclusions of the paper (regardless of whether the code and data are provided or not)?
    \item[] Answer: \answerYes{} 
    \item[] Justification: Please see~\Cref{tab:hyperparameters}.
    \item[] Guidelines:
    \begin{itemize}
        \item The answer \answerNA{} means that the paper does not include experiments.
        \item If the paper includes experiments, a \answerNo{} answer to this question will not be perceived well by the reviewers: Making the paper reproducible is important, regardless of whether the code and data are provided or not.
        \item If the contribution is a dataset and\slash or model, the authors should describe the steps taken to make their results reproducible or verifiable. 
        \item Depending on the contribution, reproducibility can be accomplished in various ways. For example, if the contribution is a novel architecture, describing the architecture fully might suffice, or if the contribution is a specific model and empirical evaluation, it may be necessary to either make it possible for others to replicate the model with the same dataset, or provide access to the model. In general. releasing code and data is often one good way to accomplish this, but reproducibility can also be provided via detailed instructions for how to replicate the results, access to a hosted model (e.g., in the case of a large language model), releasing of a model checkpoint, or other means that are appropriate to the research performed.
        \item While NeurIPS does not require releasing code, the conference does require all submissions to provide some reasonable avenue for reproducibility, which may depend on the nature of the contribution. For example
        \begin{enumerate}
            \item If the contribution is primarily a new algorithm, the paper should make it clear how to reproduce that algorithm.
            \item If the contribution is primarily a new model architecture, the paper should describe the architecture clearly and fully.
            \item If the contribution is a new model (e.g., a large language model), then there should either be a way to access this model for reproducing the results or a way to reproduce the model (e.g., with an open-source dataset or instructions for how to construct the dataset).
            \item We recognize that reproducibility may be tricky in some cases, in which case authors are welcome to describe the particular way they provide for reproducibility. In the case of closed-source models, it may be that access to the model is limited in some way (e.g., to registered users), but it should be possible for other researchers to have some path to reproducing or verifying the results.
        \end{enumerate}
    \end{itemize}

\item {\bf Open access to data and code}
    \item[] Question: Does the paper provide open access to the data and code, with sufficient instructions to faithfully reproduce the main experimental results, as described in supplemental material?
    \item[] Answer: \answerYes{} 
    \item[] Justification: An anonymous code repository containing the implementations of all methods (DFL-LP, DFL-Sketch, DFL-SA, DFL-QP, two-stage), the three benchmark tasks (Inventory, Cliff Walking, CartPole), and scripts to reproduce all reported figures is provided via the anonymous URL given in the last sentence of the Abstract.
    \item[] Guidelines:
    \begin{itemize}
        \item The answer \answerNA{} means that paper does not include experiments requiring code.
        \item Please see the NeurIPS code and data submission guidelines (\url{https://neurips.cc/public/guides/CodeSubmissionPolicy}) for more details.
        \item While we encourage the release of code and data, we understand that this might not be possible, so \answerNo{} is an acceptable answer. Papers cannot be rejected simply for not including code, unless this is central to the contribution (e.g., for a new open-source benchmark).
        \item The instructions should contain the exact command and environment needed to run to reproduce the results. See the NeurIPS code and data submission guidelines (\url{https://neurips.cc/public/guides/CodeSubmissionPolicy}) for more details.
        \item The authors should provide instructions on data access and preparation, including how to access the raw data, preprocessed data, intermediate data, and generated data, etc.
        \item The authors should provide scripts to reproduce all experimental results for the new proposed method and baselines. If only a subset of experiments are reproducible, they should state which ones are omitted from the script and why.
        \item At submission time, to preserve anonymity, the authors should release anonymized versions (if applicable).
        \item Providing as much information as possible in supplemental material (appended to the paper) is recommended, but including URLs to data and code is permitted.
    \end{itemize}

\item {\bf Experimental setting/details}
    \item[] Question: Does the paper specify all the training and test details (e.g., data splits, hyperparameters, how they were chosen, type of optimizer) necessary to understand the results?
    \item[] Answer: \answerYes{} 
    \item[] Justification: Task setup, data generation, train/test splits, and evaluation protocols are described in~\Cref{sec:exp}; the full list of predictor architectures, optimizers, learning rates, augmented-Lagrangian and sketch hyperparameters, and aggregation sizes is reported in~\Cref{tab:hyperparameters} of~\Cref{app:hyperparameters}.
    \item[] Guidelines:
    \begin{itemize}
        \item The answer \answerNA{} means that the paper does not include experiments.
        \item The experimental setting should be presented in the core of the paper to a level of detail that is necessary to appreciate the results and make sense of them.
        \item The full details can be provided either with the code, in appendix, or as supplemental material.
    \end{itemize}

\item {\bf Experiment statistical significance}
    \item[] Question: Does the paper report error bars suitably and correctly defined or other appropriate information about the statistical significance of the experiments?
    \item[] Answer: \answerYes{} 
    \item[] Justification: All reported numbers are averaged over $10$ paired random seeds, with shaded regions and error bars in~\Cref{fig:learning_curve,fig:decision_quality,fig:computation_cost} indicating one standard deviation across seeds, as stated at the start of~\Cref{sec:exp}.
    \item[] Guidelines:
    \begin{itemize}
        \item The answer \answerNA{} means that the paper does not include experiments.
        \item The authors should answer \answerYes{} if the results are accompanied by error bars, confidence intervals, or statistical significance tests, at least for the experiments that support the main claims of the paper.
        \item The factors of variability that the error bars are capturing should be clearly stated (for example, train/test split, initialization, random drawing of some parameter, or overall run with given experimental conditions).
        \item The method for calculating the error bars should be explained (closed form formula, call to a library function, bootstrap, etc.)
        \item The assumptions made should be given (e.g., Normally distributed errors).
        \item It should be clear whether the error bar is the standard deviation or the standard error of the mean.
        \item It is OK to report 1-sigma error bars, but one should state it. The authors should preferably report a 2-sigma error bar than state that they have a 96\% CI, if the hypothesis of Normality of errors is not verified.
        \item For asymmetric distributions, the authors should be careful not to show in tables or figures symmetric error bars that would yield results that are out of range (e.g., negative error rates).
        \item If error bars are reported in tables or plots, the authors should explain in the text how they were calculated and reference the corresponding figures or tables in the text.
    \end{itemize}

\item {\bf Experiments compute resources}
    \item[] Question: For each experiment, does the paper provide sufficient information on the computer resources (type of compute workers, memory, time of execution) needed to reproduce the experiments?
    \item[] Answer: \answerYes{} 
    \item[] Justification: Per-method total wall-clock training times for each of the three tasks are reported in~\Cref{fig:computation_cost} and discussed in~\Cref{sec:exp}.
    \item[] Guidelines:
    \begin{itemize}
        \item The answer \answerNA{} means that the paper does not include experiments.
        \item The paper should indicate the type of compute workers CPU or GPU, internal cluster, or cloud provider, including relevant memory and storage.
        \item The paper should provide the amount of compute required for each of the individual experimental runs as well as estimate the total compute. 
        \item The paper should disclose whether the full research project required more compute than the experiments reported in the paper (e.g., preliminary or failed experiments that didn't make it into the paper). 
    \end{itemize}
    
\item {\bf Code of ethics}
    \item[] Question: Does the research conducted in the paper conform, in every respect, with the NeurIPS Code of Ethics \url{https://neurips.cc/public/EthicsGuidelines}?
    \item[] Answer: \answerYes{} 
    \item[] Justification: The research conforms with the NeurIPS Code of Ethics. It does not involve human-subject experiments, personal or sensitive data, deception, harmful deployment, or unsafe data collection. All datasets and methods used are appropriately documented, and the work is intended for legitimate scientific purposes.
    \item[] Guidelines:
    \begin{itemize}
        \item The answer \answerNA{} means that the authors have not reviewed the NeurIPS Code of Ethics.
        \item If the authors answer \answerNo, they should explain the special circumstances that require a deviation from the Code of Ethics.
        \item The authors should make sure to preserve anonymity (e.g., if there is a special consideration due to laws or regulations in their jurisdiction).
    \end{itemize}

\item {\bf Broader impacts}
    \item[] Question: Does the paper discuss both potential positive societal impacts and negative societal impacts of the work performed?
    \item[] Answer: \answerYes{} 
    \item[] Justification: The proposed methods can improve decision quality in inventory management and any sequential decision-making problem modeled as an MDP with predicted parameters.
    \item[] Guidelines:
    \begin{itemize}
        \item The answer \answerNA{} means that there is no societal impact of the work performed.
        \item If the authors answer \answerNA{} or \answerNo, they should explain why their work has no societal impact or why the paper does not address societal impact.
        \item Examples of negative societal impacts include potential malicious or unintended uses (e.g., disinformation, generating fake profiles, surveillance), fairness considerations (e.g., deployment of technologies that could make decisions that unfairly impact specific groups), privacy considerations, and security considerations.
        \item The conference expects that many papers will be foundational research and not tied to particular applications, let alone deployments. However, if there is a direct path to any negative applications, the authors should point it out. For example, it is legitimate to point out that an improvement in the quality of generative models could be used to generate Deepfakes for disinformation. On the other hand, it is not needed to point out that a generic algorithm for optimizing neural networks could enable people to train models that generate Deepfakes faster.
        \item The authors should consider possible harms that could arise when the technology is being used as intended and functioning correctly, harms that could arise when the technology is being used as intended but gives incorrect results, and harms following from (intentional or unintentional) misuse of the technology.
        \item If there are negative societal impacts, the authors could also discuss possible mitigation strategies (e.g., gated release of models, providing defenses in addition to attacks, mechanisms for monitoring misuse, mechanisms to monitor how a system learns from feedback over time, improving the efficiency and accessibility of ML).
    \end{itemize}
    
\item {\bf Safeguards}
    \item[] Question: Does the paper describe safeguards that have been put in place for responsible release of data or models that have a high risk for misuse (e.g., pre-trained language models, image generators, or scraped datasets)?
    \item[] Answer: \answerNA{} 
    \item[] Justification: The paper poses no such risks.
    \item[] Guidelines:
    \begin{itemize}
        \item The answer \answerNA{} means that the paper poses no such risks.
        \item Released models that have a high risk for misuse or dual-use should be released with necessary safeguards to allow for controlled use of the model, for example by requiring that users adhere to usage guidelines or restrictions to access the model or implementing safety filters. 
        \item Datasets that have been scraped from the Internet could pose safety risks. The authors should describe how they avoided releasing unsafe images.
        \item We recognize that providing effective safeguards is challenging, and many papers do not require this, but we encourage authors to take this into account and make a best faith effort.
    \end{itemize}

\item {\bf Licenses for existing assets}
    \item[] Question: Are the creators or original owners of assets (e.g., code, data, models), used in the paper, properly credited and are the license and terms of use explicitly mentioned and properly respected?
    \item[] Answer: \answerYes{} 
    \item[] Justification: All third-party assets are cited at point of use: the Cliff Walking environment~\citep{sutton2018reinforcement}, the CartPole environment from OpenAI Gym~\citep{brockman2016openai}, the differentiable QP solver \texttt{qpth}~\citep{amos2017optnet}, and prior KKT-based DFL for MDPs~\citep{wang2021learning}. 
    \item[] Guidelines:
    \begin{itemize}
        \item The answer \answerNA{} means that the paper does not use existing assets.
        \item The authors should cite the original paper that produced the code package or dataset.
        \item The authors should state which version of the asset is used and, if possible, include a URL.
        \item The name of the license (e.g., CC-BY 4.0) should be included for each asset.
        \item For scraped data from a particular source (e.g., website), the copyright and terms of service of that source should be provided.
        \item If assets are released, the license, copyright information, and terms of use in the package should be provided. For popular datasets, \url{paperswithcode.com/datasets} has curated licenses for some datasets. Their licensing guide can help determine the license of a dataset.
        \item For existing datasets that are re-packaged, both the original license and the license of the derived asset (if it has changed) should be provided.
        \item If this information is not available online, the authors are encouraged to reach out to the asset's creators.
    \end{itemize}

\item {\bf New assets}
    \item[] Question: Are new assets introduced in the paper well documented and is the documentation provided alongside the assets?
    \item[] Answer: \answerYes{} 
    \item[] Justification: Please see the anonymous URL in the last sentence of Abstract.
    \item[] Guidelines:
    \begin{itemize}
        \item The answer \answerNA{} means that the paper does not release new assets.
        \item Researchers should communicate the details of the dataset\slash code\slash model as part of their submissions via structured templates. This includes details about training, license, limitations, etc. 
        \item The paper should discuss whether and how consent was obtained from people whose asset is used.
        \item At submission time, remember to anonymize your assets (if applicable). You can either create an anonymized URL or include an anonymized zip file.
    \end{itemize}

\item {\bf Crowdsourcing and research with human subjects}
    \item[] Question: For crowdsourcing experiments and research with human subjects, does the paper include the full text of instructions given to participants and screenshots, if applicable, as well as details about compensation (if any)? 
    \item[] Answer: \answerNA{} 
    \item[] Justification: The paper does not involve crowdsourcing nor research with human subjects.
    \item[] Guidelines:
    \begin{itemize}
        \item The answer \answerNA{} means that the paper does not involve crowdsourcing nor research with human subjects.
        \item Including this information in the supplemental material is fine, but if the main contribution of the paper involves human subjects, then as much detail as possible should be included in the main paper. 
        \item According to the NeurIPS Code of Ethics, workers involved in data collection, curation, or other labor should be paid at least the minimum wage in the country of the data collector. 
    \end{itemize}

\item {\bf Institutional review board (IRB) approvals or equivalent for research with human subjects}
    \item[] Question: Does the paper describe potential risks incurred by study participants, whether such risks were disclosed to the subjects, and whether Institutional Review Board (IRB) approvals (or an equivalent approval/review based on the requirements of your country or institution) were obtained?
    \item[] Answer: \answerNA{} 
    \item[] Justification: The paper does not involve crowdsourcing nor research with human subjects.
    \item[] Guidelines:
    \begin{itemize}
        \item The answer \answerNA{} means that the paper does not involve crowdsourcing nor research with human subjects.
        \item Depending on the country in which research is conducted, IRB approval (or equivalent) may be required for any human subjects research. If you obtained IRB approval, you should clearly state this in the paper. 
        \item We recognize that the procedures for this may vary significantly between institutions and locations, and we expect authors to adhere to the NeurIPS Code of Ethics and the guidelines for their institution. 
        \item For initial submissions, do not include any information that would break anonymity (if applicable), such as the institution conducting the review.
    \end{itemize}

\item {\bf Declaration of LLM usage}
    \item[] Question: Does the paper describe the usage of LLMs if it is an important, original, or non-standard component of the core methods in this research? Note that if the LLM is used only for writing, editing, or formatting purposes and does \emph{not} impact the core methodology, scientific rigor, or originality of the research, declaration is not required.
    \item[] Answer: \answerNA{}{} 
    \item[] Justification: The core method development in this research does not involve LLMs as any important, original, or non-standard components.
    \item[] Guidelines:
    \begin{itemize}
        \item The answer \answerNA{} means that the core method development in this research does not involve LLMs as any important, original, or non-standard components.
        \item Please refer to our LLM policy in the NeurIPS handbook for what should or should not be described.
    \end{itemize}

\end{enumerate}